\DeclareMathOperator*{\argmax}{arg\,max}
\newcommand{\vsim}{\mathrel{\scalebox{1}[1.5]{$\shortmid$}\mkern-3.1mu\raisebox{0.1ex}{$\sim$}}}
\newcommand{\vapprox}{\mathrel{\scalebox{1}[1.5]{$\shortmid$}\mkern-3.1mu\raisebox{0.1ex}{$\approx$}}}
\title{Bayesian Entailment Hypothesis: How Brains Implement Monotonic and Non-monotonic Reasoning\thanks{This paper was submitted to IJCAI 2020 and rejected.}}
\author{Hiroyuki Kido}
\institute{Cardiff University\\
\email{KidoH@cardiff.ac.uk}}
\begin{document}
\begin{sloppypar}
\maketitle
\begin{abstract}
Recent success of Bayesian methods in neuroscience and artificial intelligence makes researchers conceive the hypothesis that the brain is a Bayesian machine. Since logic, as the laws of thought, is a product and a practice of our human brains, it is natural to think that there is a Bayesian algorithm and data-structure for entailment. This paper gives a Bayesian account of entailment and characterizes its abstract inferential properties. The Bayesian entailment is shown to be a monotonic consequence relation in an extreme case. In general, it is a non-monotonic consequence relation satisfying Classical cautious monotony and Classical cut we introduce to reconcile existing conflicting views in the field. The preferential entailment, which is a representative non-monotonic consequence relation, is shown to correspond to a maximum a posteriori entailment. It is an approximation of the Bayesian entailment. We derive it based on the fact that maximum a posteriori estimation is an approximation of Bayesian estimation. We finally discuss merits of our proposals in terms of encoding preferences on defaults, handling change and contradiction, and modeling human entailment.
\end{abstract}

\section{Introduction}
Bayes' theorem is a simple mathematical equation published in 1763. Today, it plays an important role in various fields such as AI, neuroscience, cognitive science, statistical physics and bioinformatics. It lays the foundation of most modern AI systems including self-driving cars, machine language translation, speech recognition and medical diagnosis \cite{russell:09}. Recent studies of neuroscience, e.g., \cite{lee:03,knill:04,george:05,ichisugi:07,chikkerur:10,colombo:12,funamizu:16}, empirically show that Bayesian methods explain several functions of the cerebral cortex. It is the outer portion of the brain in charge of higher-order functions such as perception, memory, emotion and thought. These successes of Bayesian methods make researchers conceive the Bayesian brain hypothesis that brain is a Bayesian machine \cite{friston:12,sanborn:16}.
%
%
\par
If the Bayesian brain hypothesis is true then it is natural to think that there is a Bayesian algorithm and data-structure for logical reasoning. This is because logic, as the laws of thought, is a product and a practice of our human brains. Such Bayesian account of logical reasoning is important. First, it has a potential to be a mathematical model to explain how the brain implements logical reasoning. Second, it theoretically supports the Bayesian brain hypothesis in terms of logic. Third. it gives an opportunity and a way to critically assess the existing formalisms of logical reasoning. Nevertheless, few research has focused on reformulating logical reasoning in terms of the Bayesian perspective (see Section \ref{sec:dis} for discussion).
\par
In this paper, we begin by assuming the posterior distribution over valuation functions, denoted by $v$. The probability of each valuation function represents how much the state of the world specified by the valuation function is natural, normal or typical. We then assume a causal relation from valuation functions to each sentence, denoted by $\alpha$. Under the assumptions, the probability that $\alpha$ is true, denoted by $p(\alpha)$, will be shown to have
%
\begin{eqnarray*}
p(\alpha)=\sum_{v}p(\alpha,v)=\sum_{v}p(\alpha|v)p(v).
\end{eqnarray*}
That is, the probability of any sentence is not primitive but dependent on all valuation functions probabilistically distributed. Given a set $\Delta$ of sentences with the same assumptions, we will show to have
%
%
\begin{eqnarray*}
p(\alpha|\Delta)=\sum_{v}p(\alpha|v)p(v|\Delta).
\end{eqnarray*}
This equation is known as Bayesian learning \cite{russell:09}. Intuitively speaking, $\Delta$ updates the probability distribution over valuation functions, i.e., $p(v)$ for all $v$, and then the updated distribution is used to predict the truth of $\alpha$. We define a Bayesian entailment, denoted by $\Delta\vapprox_{\omega}\alpha$, using $p(\alpha|\Delta)\geq\omega$, as usual. 
\par
We derive several important facts from the idea. The Bayesian entailment is shown to be a monotonic consequence relation when $\omega=1$. In general, it is a non-monotonic consequence relation satisfying Classical cautious monotony and Classical cut we introduce to reconcile existing conflicting views \cite{gabbay:85,brewka:97} in the field. The preferential entailment \cite{shoham:87}, which is a representative non-monotonic consequence relation, is shown to correspond to a maximum a posteriori entailment. It is an approximation of the Bayesian entailment. It is derived from the fact that maximum a posteriori estimation is an approximation of Bayesian estimation. These results imply that both monotonic and non-monotonic consequence relations can be seen as Bayesian learning with a fixed probability threshold.
\par
This paper is orgranized as follows. Section 2 gives a probabilistic model for a Bayesian entailment. For the sake of correctness of the Bayesian entailment, Section 3 discusses its inferential properties in terms of monotonic and non-monotonic consequence relations. In Section 4, we discuss merits of our proposals in terms of encoding preferences on defaults, handling change and contradiction, and modeling human entailment.
%
\section{Bayesian Entailment}
Let ${\cal L}$, ${\cal P}$ and $v$ respectively denote the propositional language, the set of all propositional symbols in ${\cal L}$, and a valuation function, $v:{\cal P}\rightarrow\{0,1\}$, where $0$ and $1$ mean the truth values, \textit{false} and \textit{true}, respectively. To handle uncertainty of states of the world, we assume that valuation functions are probabilistically distributed. Let $V$ denote a random variable over valuation functions. $p(V=v_{i})$ denotes the probability of valuation function $v_{i}$. It reflects the probability of the state of the world specified by $v_{i}$. Given two valuation functions $v_{1}$ and $v_{2}$, $p(V=v_{1})>p(V=v_{2})$ represents that the state of the world specified by $v_{1}$ is more natural/typical/normal than that of $v_{2}$. When the cardinality of ${\cal P}$ is $n$, there are $2^{n}$ possible states of the world. Thus, there are $2^{n}$ possible valuation functions. It is the case that $0\leq p(v_{i})\leq 1$, for all $i$ such that $1\leq i\leq 2^{n}$, and $\sum_{i=1}^{2^{n}}p(V=v_{i})=1$.
\par
We assume that every propositional sentence is a random variable that has a truth value either $0$ or $1$. For all $\alpha\in{\cal L}$, $p(\alpha=1)$ represents the probability that $\alpha$ is \textit{true} and $p(\alpha=0)$ that $\alpha$ is \textit{false}. We assume that $\llbracket\alpha\rrbracket$ denotes the set of all valuation functions in which $\alpha$ is true and $\llbracket\alpha\rrbracket_{v}$ denotes the truth value under valuation function $v$.
%

\begin{definition}[Interpretation]\label{def:int}
Let $\alpha$ be a propositional sentence and $V$ be a valuation function. The conditional probability distribution over $\alpha$ given $V$ is given as follows.
\begin{eqnarray*}
p(\alpha=1|V)&=&\llbracket\alpha\rrbracket_{V}\\
p(\alpha=0|V)&=&1-\llbracket\alpha\rrbracket_{V}
\end{eqnarray*}
\end{definition}
From the viewpoint of logic, it is natural to assume that a truth value of a sentence is caused only by the valuation functions. Thus, $p(\alpha)$ is given by
\begin{eqnarray*}
p(\alpha)=\sum_{v_{i}}p(\alpha,V=v_{i})=\sum_{v_{i}}p(\alpha|V=v_{i})p(V=v_{i}).
\end{eqnarray*}
\begin{example}
Suppose two propositional symbols $a$ and $b$. The left table in Table \ref{ex.Int} shows all of the $2^{2}=4$ possible valuation functions and their probability distribution. The right table in Table \ref{ex.Int} shows $p(a\lor\lnot b|V)$. Let us assume the following probability distribution over valuation functions.
\begin{eqnarray*}
p(V)&=&(p(V=v_{1}),p(V=v_{2}),p(V=v_{3}),p(V=v_{4}))\\
&=&(0.5, 0.2, 0, 0.3)
\end{eqnarray*}
Now, $p(a\lor\lnot b=1)$ and $p(a\lor\lnot b=0)$ are given as follows.
\begin{eqnarray*}
p(a\lor\lnot b=1)&=&\sum_{i=1}^{4}p(a\lor\lnot b=1|V=v_{i})p(V=v_{i})\\
&=&\sum_{i=1}^{4}\llbracket a\lor\lnot b\rrbracket_{V=v_{i}}p(V=v_{i})\\
&=&p(V=v_{1})+p(V=v_{3})+p(V=v_{4})\\
&=&0.8\\
p(a\lor\lnot b=0)&=&1-p(a\lor\lnot b=1)\\
&=&0.2
\end{eqnarray*}
\end{example}
\begin{table}[t]
\caption{The left table shows all valuation functions and their distribution. The right table shows $p(a\lor\lnot b|V)$.}
\label{ex.Int}
\begin{center}
\begin{tabular}{cc}
\begin{minipage}{0.5\hsize}
\begin{center}
\begin{tabular}{c|c|cc}
& $p(V)$ & $a$ & $b$\\\hline
$v_{1}$ & $0.5$ & $0$ & $0$\\
$v_{2}$ & $0.2$ & $0$ & $1$\\
$v_{3}$ & $0$ & $1$ & $0$\\
$v_{4}$ & $0.3$ & $1$ & $1$
\end{tabular}
\end{center}
\end{minipage}
\begin{minipage}{0.5\hsize}
\begin{center}
\begin{tabular}{c|cc}
& $a\lor\lnot b=0$ & $a\lor\lnot b=1$\\\hline
$V=v_{1}$ & $0$ & $1$\\
$V=v_{2}$ & $1$ & $0$ \\
$V=v_{3}$ & $0$ & $1$ \\
$V=v_{4}$ & $0$ & $1$
\end{tabular}
\end{center}
\end{minipage}
\end{tabular}
\end{center}
\end{table}
\par
Definition \ref{def:int} implies that the probability of the truth of a sentence is not primitive, but dependent on the valuation functions. Therefore, we need to guarantee that probabilities on sentences satisfy the Kolmogorov axioms.
%
\begin{proposition}\label{kolmogorov}
The following expressions hold, for all formulae $\alpha,\beta\in{\cal L}$.
\begin{enumerate}
\item $0\leq p(\alpha=i)$, for all $i$.
\item $\sum_{i}p(\alpha=i)=1$.
\item $p(\alpha\lor\beta=i)=p(\alpha=i)+p(\beta=i)-p(\alpha\land\beta=i)$, for all $i$.
\end{enumerate}
\end{proposition}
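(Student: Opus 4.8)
The plan is to verify the three Kolmogorov axioms directly from Definition \ref{def:int} and the marginalization formula $p(\alpha) = \sum_{v_i} p(\alpha \mid V = v_i)\, p(V = v_i)$, reducing everything to elementary facts about the truth-value semantics $\llbracket \cdot \rrbracket_v$ and the fact that $p(V=\cdot)$ is itself a probability distribution. For item (1), I would note that $p(\alpha = i) = \sum_{v_i} p(\alpha = i \mid V = v_i) p(V = v_i)$ is a sum of products of nonnegative quantities: each $p(V = v_i) \geq 0$ by assumption, and each $p(\alpha = i \mid V = v_i)$ equals either $\llbracket \alpha \rrbracket_{v_i} \in \{0,1\}$ or $1 - \llbracket \alpha \rrbracket_{v_i} \in \{0,1\}$ by Definition \ref{def:int}, hence nonnegative; so the sum is nonnegative.

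For item (2), since $\alpha$ ranges over truth values $\{0,1\}$, I would compute $\sum_i p(\alpha = i) = p(\alpha = 0) + p(\alpha = 1) = \sum_{v} \big( p(\alpha = 0 \mid V = v) + p(\alpha = 1 \mid V = v)\big) p(V = v)$, and observe that the inner sum is $(1 - \llbracket \alpha \rrbracket_v) + \llbracket \alpha \rrbracket_v = 1$ for every $v$ by Definition \ref{def:int}; what remains is $\sum_v p(V = v) = 1$, which holds by the normalization assumption on the distribution over valuation functions stated in the text.

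For item (3), the key observation is the pointwise (per-valuation) identity on truth values: for any fixed $v$ and any $i \in \{0,1\}$, we have $\llbracket \alpha \lor \beta \rrbracket_v^{(i)} = \llbracket \alpha \rrbracket_v^{(i)} + \llbracket \beta \rrbracket_v^{(i)} - \llbracket \alpha \land \beta \rrbracket_v^{(i)}$, where I write $\llbracket \gamma \rrbracket_v^{(1)}$ for $\llbracket \gamma \rrbracket_v$ and $\llbracket \gamma \rrbracket_v^{(0)}$ for $1 - \llbracket \gamma \rrbracket_v$. This is just a four-row case check on the Boolean values of $\alpha$ and $\beta$ at $v$ (it is the indicator-function inclusion–exclusion identity $\mathbf{1}_{A \cup B} = \mathbf{1}_A + \mathbf{1}_B - \mathbf{1}_{A \cap B}$ specialized to a single point, applied to the truth sets $\llbracket \alpha \rrbracket, \llbracket \beta \rrbracket$, and for $i=0$ to their complements via De Morgan). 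Multiplying this identity by $p(V=v)$, summing over all $v$, and recognizing each resulting sum as the corresponding $p(\cdot = i)$ via the marginalization formula and Definition \ref{def:int} yields item (3).

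I do not anticipate a serious obstacle: the proof is essentially bookkeeping, pushing the classical semantic identities through the (finite) expectation over $V$. The only point requiring mild care is the $i = 0$ case of item (3), where one must be sure that $p(\alpha \lor \beta = 0 \mid V = v) = 1 - \llbracket \alpha \lor \beta \rrbracket_v$ combines correctly with the $i=0$ versions of the other two terms; handling the $i=0$ and $i=1$ cases uniformly via the $\llbracket \cdot \rrbracket_v^{(i)}$ notation above makes this transparent, since the single-point inclusion–exclusion identity holds for a set and its complement alike. A secondary point worth stating explicitly is that all sums are finite (there are $2^n$ valuation functions), so interchanging summation order and splitting sums is unproblematic.
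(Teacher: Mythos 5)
Your proposal is correct and follows essentially the same route as the paper's proof: reduce each statement to a per-valuation identity on truth values via $p(\alpha=i)=\sum_v p(\alpha=i\mid v)p(v)$, and for item (3) verify the inclusion--exclusion identity $\llbracket\alpha\lor\beta\rrbracket_{v}=\llbracket\alpha\rrbracket_{v}+\llbracket\beta\rrbracket_{v}-\llbracket\alpha\land\beta\rrbracket_{v}$ by a four-case check before summing against $p(v)$. Your uniform $\llbracket\cdot\rrbracket_{v}^{(i)}$ bookkeeping for the $i=0$ case is only a notational repackaging of what the paper does with $1-\llbracket\cdot\rrbracket_{v}$.
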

\begin{proof}
Since any sentence takes a truth value either $0$ or $1$, it is sufficient for (1) and (2) to show $p(\alpha=0)+p(\alpha=1)=1$ and $0\leq p(\alpha=0),p(\alpha=1)\leq 1$. The following expressions hold.
\begin{eqnarray*}
p(\alpha=0)&=&\sum_{v}p(\alpha=0|v)p(v)=\sum_{v}(1-\llbracket\alpha\rrbracket_{v})p(v)\\
p(\alpha=1)&=&\sum_{v}p(\alpha=1|v)p(v)=\sum_{v}\llbracket\alpha\rrbracket_{v}p(v)
\end{eqnarray*}
Here, we have abbreviated $p(V=v)$ to $p(v)$ and $\llbracket\alpha\rrbracket_{V=v}$ to $\llbracket\alpha\rrbracket_{v}$. (1) is true because $0\leq p(v)$ holds, for all $v$. (2) is true because $p(\alpha=0)+p(\alpha=1)=\sum_{v}p(v)=1$ holds. (3) can be developed as follows, where the first expression comes when $i=0$ and the second when $i=1$. 
\begin{eqnarray*}
\sum_{v}p(v)(1-\llbracket\alpha\lor\beta\rrbracket_{v})&=&\sum_{v}p(v)(1-\{\llbracket\alpha\rrbracket_{v}+\llbracket\beta\rrbracket_{v}-\llbracket\alpha\land\beta\rrbracket_{v}\})\\
\sum_{v}p(v)\llbracket\alpha\lor\beta\rrbracket_{v}&=&\sum_{v}p(v)\{\llbracket\alpha\rrbracket_{v}+\llbracket\beta\rrbracket_{v}-\llbracket\alpha\land\beta\rrbracket_{v}\}
\end{eqnarray*}
There are four possible cases. If $\llbracket\alpha\rrbracket_{v}=\llbracket\beta\rrbracket_{v}=0$ then the expression in the bracket of the right expressions turn out to be $0(=0+0-0)$, if $\llbracket\alpha\rrbracket_{v}=0$ and $\llbracket\beta\rrbracket_{v}=1$ then $1(=0+1-0)$, if $\llbracket\alpha\rrbracket_{v}=1$ and $\llbracket\beta\rrbracket_{v}=0$ then $1(=1+0-0)$, and if $\llbracket\alpha\rrbracket_{v}=\llbracket\beta\rrbracket_{v}=1$ then $1(=1+1-1)$. All the results are consistent with $\llbracket\alpha\lor\beta\rrbracket_{v}$.
\end{proof}
\par
%
\begin{proposition}\label{negation}
$p(\alpha=0)=p(\neg\alpha=1)$ holds, for any $\alpha\in{\cal L}$.
\end{proposition}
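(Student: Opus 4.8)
The plan is to unfold both sides of the claimed identity $p(\alpha=0)=p(\neg\alpha=1)$ using the marginalization formula $p(\cdot)=\sum_{v}p(\cdot\,|\,v)p(v)$ together with Definition \ref{def:int}, and then reduce the equality to a pointwise statement about truth values under each valuation function $v$. Concretely, I would write
\begin{eqnarray*}
p(\alpha=0)&=&\sum_{v}p(\alpha=0|v)p(v)=\sum_{v}(1-\llbracket\alpha\rrbracket_{v})p(v),\\
p(\neg\alpha=1)&=&\sum_{v}p(\neg\alpha=1|v)p(v)=\sum_{v}\llbracket\neg\alpha\rrbracket_{v}p(v),
\end{eqnarray*}
so that it suffices to show $\sum_{v}(1-\llbracket\alpha\rrbracket_{v})p(v)=\sum_{v}\llbracket\neg\alpha\rrbracket_{v}p(v)$.

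Since the two sums range over the same finite set of valuation functions and carry the same weights $p(v)$, the identity follows if $\llbracket\neg\alpha\rrbracket_{v}=1-\llbracket\alpha\rrbracket_{v}$ for every $v$. This in turn is just the defining truth-table behaviour of classical negation: if $\alpha$ is true under $v$ then $\llbracket\alpha\rrbracket_{v}=1$ and $\neg\alpha$ is false so $\llbracket\neg\alpha\rrbracket_{v}=0=1-1$, and symmetrically if $\alpha$ is false under $v$. I would present this as a short two-case check, mirroring the style of the four-case check used at the end of the proof of Proposition \ref{kolmogorov}.

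The only subtlety worth flagging is that this argument tacitly relies on the semantics of the propositional connective $\neg$ being the usual one extending a valuation $v:{\cal P}\to\{0,1\}$ to all of ${\cal L}$ — i.e. that $\llbracket\cdot\rrbracket_{v}$ is the standard classical valuation — which the excerpt takes for granted when it writes $\llbracket\alpha\rrbracket_{v}$ for arbitrary sentences. Granting that, there is no real obstacle: the statement is essentially a one-line consequence of linearity of the finite sum over $v$ and the pointwise negation identity. If one wanted to be maximally careful one could instead invoke Proposition \ref{kolmogorov}(2) with $\alpha=0$, but the direct computation above is cleaner and self-contained.
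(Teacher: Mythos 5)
Your proposal is correct and follows essentially the same route as the paper: the paper's proof likewise expands $p(\lnot\alpha=1)=\sum_{v}\llbracket\lnot\alpha\rrbracket_{v}p(v)=\sum_{v}(1-\llbracket\alpha\rrbracket_{v})p(v)=p(\alpha=0)$, relying on the classical identity $\llbracket\lnot\alpha\rrbracket_{v}=1-\llbracket\alpha\rrbracket_{v}$ and Definition \ref{def:int}. Your version merely spells out the pointwise two-case check that the paper leaves implicit.
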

\begin{proof}
It is true that $p(\lnot\alpha=1)=\sum_{v}\llbracket\lnot\alpha\rrbracket_{v} p(v)=\sum_{v}(1-\llbracket\alpha\rrbracket_{v})p(v)=p(\alpha=0)$.
\end{proof}
In what follows, we thus replace $p(\alpha=0)$ by $p(\lnot\alpha=1)$ and then abbreviate $p(\lnot\alpha=1)$ to $p(\lnot\alpha)$, for all sentences $\alpha\in{\cal L}$.
%
%
\par
Dependency among random variables is shown in Figure \ref{dependency2} using a Bayesian network, a directed acyclic graphical model. A sentence $\alpha$ has a directed edge only from a valuation function $V$. It represents that valuation functions are the direct causes of truth values of sentences. The dependency between $V$ and another sentence $\beta_{i}$ is the same as $\alpha$. Only $\beta_{i}$ is coloured grey. It means that $\beta_{i}$ is assumed to be observed, which is in contrast to the other nodes assumed to be predicted or estimated. The box surrounding $\beta_{i}$ is a plate. It represents that there are $N$ sentences $\beta_{1}$, $\beta_{2}$, ... and $\beta_{N}$ to which there is a directed edge from $V$. Given the dependency, the conditional probability of $\alpha$ given $\beta_{1}$, $\beta_{2}$, ... and $\beta_{N}$ is given as follows.
%
%
\begin{eqnarray*}
p(\alpha |\beta_{1},\beta_{2},\cdots,\beta_{N})&=&\frac{p(\alpha,\beta_{1},\beta_{2},\cdots,\beta_{N})}{p(\beta_{1},\beta_{2},\cdots,\beta_{N})}=\frac{\sum_{v}p(v)p(\alpha|v)\prod_{i=1}^{N}p(\beta_{i}|v)}{\sum_{v}p(v)\prod_{i=1}^{N}p(\beta_{i}|v)}\\
&=&\frac{\sum_{v}p(v)\llbracket\alpha\rrbracket_{v}\prod_{i=1}^{N}\llbracket\beta_{i}\rrbracket_{v}}{\sum_{v}p(v)\prod_{i=1}^{N}\llbracket\beta_{i}\rrbracket_{v}}
=\frac{\sum_{v}p(v)\llbracket\alpha\rrbracket_{v}\llbracket\beta_{1},\beta_{2},...,\beta_{N}\rrbracket_{v}}{\sum_{v}p(v)\llbracket\beta_{1},\beta_{2},...,\beta_{N}\rrbracket_{v}}\\
&=&\frac{\sum_{v\in\llbracket\alpha,\beta_{1},\beta_{2},...,\beta_{N}\rrbracket}p(v)}{\sum_{v\in\llbracket\beta_{1},\beta_{2},...,\beta_{N}\rrbracket}p(v)}
\end{eqnarray*} 
%
%
\begin{example}[Continued]
$p(\lnot a|a\lor\lnot b, \lnot a\lor b)$ is given as follows.
\begin{eqnarray*}
p(\lnot a|a\lor\lnot b, \lnot a\lor b)&=&\frac{\sum_{v}p(v)\llbracket\lnot a\rrbracket_{v}\llbracket a\lor\lnot b, \lnot a\lor b \rrbracket_{v}}{\sum_{v}p(v)\llbracket a\lor\lnot b, \lnot a\lor b\rrbracket_{v}}\\
&=&\frac{p(v_{1})}{p(v_{1})+p(v_{4})}=\frac{0.5}{0.8}=0.625
\end{eqnarray*}
\end{example}

\par
Now, we want to investigate logical properties of $p(\alpha |\beta_{1},\beta_{2},...,\beta_{N})$. We thus define a consequence relation between $\{\beta_{1},\beta_{2},...,\beta_{N}\}$ and $\alpha$.
%
%
\begin{definition}[Bayesian entailment]\label{def:BE}
Let $\alpha\in{\cal L}$ be a sentence, $\Delta\subseteq{\cal L}$ be a set of sentences, and $\omega\in[0,1]$ be a probability. $\alpha$ is a Bayesian entailment of $\Delta$ with $\omega$, denoted by $\Delta\vapprox_{\omega}\alpha$, if $p(\alpha|\Delta)\geq\omega$ or $p(\Delta)=0$.
%
%
\end{definition}
Condition $p(\Delta)=0$ guarantees that $\alpha$ is a Bayesian entailment of $\Delta$ when $p(\alpha|\Delta)$ is undefined due to division by zero. It happens when $\Delta$ has no models, i.e., $\llbracket\Delta\rrbracket=\emptyset$, or zero probability, i.e., $p(v)=0$, for all $v\in\llbracket\Delta\rrbracket$. $\vapprox_{\omega}\alpha$ is a special case of Definition \ref{def:BE}. It holds when $p(\alpha)\geq\omega$. We call Definition \ref{def:BE} Bayesian entailment because $p(\alpha|\Delta)$ can be developed as follows.
%
\begin{eqnarray*}
p(\alpha |\Delta)=\frac{\sum_{v}p(v,\alpha,\Delta)}{p(\Delta)}=\frac{\sum_{v}p(\alpha|v)p(v|\Delta)p(\Delta)}{p(\Delta)}=\sum_{v}p(\alpha|v)p(v|\Delta)
\end{eqnarray*} 
The expression is often called \emph{Bayesian learning} where $\Delta$ updates the distribution over valuation functions, i.e., $p(V)$, and truth values of $\alpha$ is predicted using the updated distribution. Therefore, the Bayesian entailment allows us to see consequences in logic are predictions in Bayesian learning.
\begin{figure}[t]
\begin{center}
 \includegraphics[scale=0.4]{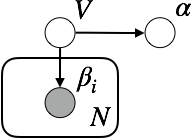}
  \caption{Dependency between the random variable $V$ of a valuation function and the random variables $\alpha$ and $\beta_{i}$ of propositional sentences.}
  \label{dependency2}
\end{center}
\end{figure}

\if0
\begin{theorem}\label{thrm:1}
Let $p(v)\neq 0$, for all valuation functions $v$. For all sentences $\alpha\in {\cal L}$ and sets of sentences $\Delta\subseteq{\cal L}$, $\Delta\vapprox_{1}\alpha$ holds if and only if $\Delta\models\alpha$ holds.
%
%
\end{theorem}
\begin{proof}
(Case: $p(\Delta)\neq 0$) Since $\Delta\models\alpha$ holds if and only if $\llbracket\Delta\rrbracket\subseteq\llbracket\alpha\rrbracket$ holds, we have
\begin{eqnarray*}
p(\alpha|\Delta)&=&\frac{p(\alpha,\Delta)}{p(\Delta)}=\frac{\sum_{v}p(\alpha|v)p(\Delta|v)p(v)}{\sum_{v}p(\Delta|v)p(v)}=\frac{\sum_{v}\llbracket\alpha\rrbracket_{v}\llbracket\Delta\rrbracket_{v}p(v)}{\sum_{v}\llbracket\Delta\rrbracket_{v}p(v)}\\
&=&\frac{\sum_{v}\llbracket\Delta\rrbracket_{v}p(v)}{\sum_{v}\llbracket\Delta\rrbracket_{v}p(v)}=1.
\end{eqnarray*}
%
(Case: $p(\Delta)=0$) Since $p(\Delta)=\sum_{v}\llbracket\Delta\rrbracket_{v}p(v)=0$ holds, $\llbracket\Delta\rrbracket_{v}=0$ or $p(v)=0$ holds, for all $v$. From the assumption of Theorem \ref{thrm:1}, $\llbracket\Delta\rrbracket_{v}=0$ must hold, for all $v$. Since $\Delta$ has no models, $\Delta\models\alpha$ holds regardless of $\alpha$ and $\Delta$. As for the Bayesian entailment, it is obvious from Definition \ref{def:BE} that $\Delta\vapprox_{1}\alpha$ holds regardless of $\alpha$ and $\Delta$.
%
\end{proof}
\fi
\section{Correctness}
%
This section aims to show correctness of the Bayesian entailment. We first prove that a natural restriction of the Bayesian entailment is a classical consequence relation. We then prove that a weaker restriction of the Bayesian entailment can be seen as a non-monotonic consequence relation.
\subsection{Propositional Entailment}
Recall that propositional entailment $\Delta\models\alpha$ is defined as follows: For all valuation functions $v$, if $\Delta$ is true in $v$ then $\alpha$ is true in $v$. The Bayesian entailment $\vapprox_{1}$ works in a similar way as the propositional entailment. The only difference is that the Bayesian entailment ignores valuation functions with zero probability. The valuation functions with zero probability represent impossible states of the world.
\begin{theorem}\label{thrm:1}
Let $\alpha\in{\cal L}$ be a sentence and $\Delta\subseteq{\cal L}$ be a set of sentences. $\Delta\vapprox_{1}\alpha$ holds if and only if, for all valuation functions $v$ such that $p(v)\neq 0$, if $\Delta$ is true in $v$ then $\alpha$ is true in $v$.
\end{theorem}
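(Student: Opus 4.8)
The plan is to split on whether $p(\Delta)=0$ and, in each case, unwind Definition~\ref{def:BE} together with the closed form derived above, namely $p(\alpha|\Delta)=\left(\sum_{v\in\llbracket\alpha,\Delta\rrbracket}p(v)\right)\big/\left(\sum_{v\in\llbracket\Delta\rrbracket}p(v)\right)$ whenever $p(\Delta)\neq 0$, together with the expansion $p(\Delta)=\sum_{v}p(v)\llbracket\Delta\rrbracket_{v}$.

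First I would dispose of the degenerate case $p(\Delta)=0$. From $p(\Delta)=\sum_{v}p(v)\llbracket\Delta\rrbracket_{v}$ and $p(v)\geq 0$, the condition $p(\Delta)=0$ is equivalent to: every $v$ with $\llbracket\Delta\rrbracket_{v}=1$ satisfies $p(v)=0$; equivalently, there is no valuation $v$ with $p(v)\neq 0$ in which $\Delta$ is true. On the left, $\Delta\vapprox_{1}\alpha$ then holds by the second disjunct of Definition~\ref{def:BE}. On the right, the universally quantified implication is vacuously true, since its hypothesis ``$p(v)\neq 0$ and $\Delta$ true in $v$'' is never satisfied. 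Hence both sides hold and the equivalence is immediate here.

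Next, the main case $p(\Delta)\neq 0$. The second disjunct of Definition~\ref{def:BE} is now unavailable, so $\Delta\vapprox_{1}\alpha$ is equivalent to $p(\alpha|\Delta)\geq 1$; since the numerator $\sum_{v\in\llbracket\alpha,\Delta\rrbracket}p(v)$ is a sub-sum of the denominator $\sum_{v\in\llbracket\Delta\rrbracket}p(v)$ over nonnegative terms, we have $p(\alpha|\Delta)\leq 1$ always, so this collapses to $p(\alpha|\Delta)=1$, i.e. $\sum_{v\in\llbracket\alpha,\Delta\rrbracket}p(v)=\sum_{v\in\llbracket\Delta\rrbracket}p(v)$. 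Because $\llbracket\alpha,\Delta\rrbracket=\llbracket\alpha\rrbracket\cap\llbracket\Delta\rrbracket\subseteq\llbracket\Delta\rrbracket$, subtracting gives $\sum_{v\in\llbracket\Delta\rrbracket\setminus\llbracket\alpha\rrbracket}p(v)=0$, which by nonnegativity of $p(v)$ is equivalent to $p(v)=0$ for every $v\in\llbracket\Delta\rrbracket\setminus\llbracket\alpha\rrbracket$. Taking the contrapositive, this says exactly: for every $v$ with $p(v)\neq 0$, if $v\in\llbracket\Delta\rrbracket$ then $v\in\llbracket\alpha\rrbracket$, i.e. if $\Delta$ is true in $v$ then $\alpha$ is true in $v$. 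Chaining these equivalences in both directions completes the proof.

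I do not expect a real obstacle: the statement is essentially a restatement of ``$p(\alpha|\Delta)=1$ iff the $\Delta$-models lying outside $\llbracket\alpha\rrbracket$ are null''. The only points needing a little care are (i) handling the $p(\Delta)=0$ branch so the vacuous-truth reading on the right lines up with the definitional convention on the left, and (ii) recording explicitly that $p(\alpha|\Delta)\leq 1$ so that ``$\geq 1$'' may be replaced by ``$=1$''.
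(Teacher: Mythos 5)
Your proposal is correct and follows essentially the same route as the paper's proof: both reduce the claim to the observation that, when $p(\Delta)\neq 0$, $p(\alpha|\Delta)=1$ fails exactly when some $v\in\llbracket\Delta\rrbracket\setminus\llbracket\alpha\rrbracket$ has $p(v)\neq 0$, and both handle the $p(\Delta)=0$ branch via the second disjunct of Definition~\ref{def:BE}. The only difference is presentational: the paper argues the contrapositive ($\Delta\not\vapprox_{1}\alpha$ iff a nonzero-probability countermodel exists), while you give the direct case split on $p(\Delta)$, with the same underlying computation.
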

\begin{proof}
We show that $\Delta\not\vapprox_{1}\alpha$ holds if and only if there is a valuation function $v$ such that $p(v)\neq 0$ holds, $\Delta$ is true in $v$, and $\alpha$ is false in $v$. From Definition \ref{def:BE}, $\Delta\not\vapprox_{1}\alpha$ holds if and only if $p(\Delta)\neq 0$ and $p(\alpha|\Delta)\neq 1$ hold. From Definition \ref{def:int}, $p(\Delta)\neq 0$ holds if and only if there is a valuation function $v^{*}$ such that $p(v^{*})\neq 0$ holds and $\Delta$ is true in $v^{*}$. ($\Leftarrow$) This has been proven so far. ($\Rightarrow$) $p(\Delta)\neq 0$ holds due to $v^{*}$. Since $p(\alpha|\Delta)=\frac{\sum_{v}p(v)\llbracket\alpha\rrbracket_{v}\llbracket\Delta\rrbracket_{v}}{\sum_{v}p(v)\llbracket\Delta\rrbracket_{v}}\neq 1$, there is $v\in\llbracket\Delta\rrbracket\setminus\llbracket\alpha\rrbracket$ such that $p(v)\neq 0$. $\Delta$ is true in $v$ and $\alpha$ is false in $v$.
\end{proof}
%
When no valuation functions take zero probability, consequences with the Bayesian entailment with probability one coincide with the propositional entailment.
%
\begin{proposition}\label{prop:1}
Let $\alpha\in{\cal L}$ be a sentence, $\Delta\subseteq{\cal L}$ be a set of sentences. If there is no valuation functions $v$ such that $p(v)= 0$ then $\Delta\vapprox_{1}\alpha$ holds if and only if $\Delta\models\alpha$ holds.
\end{proposition}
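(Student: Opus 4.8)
The plan is to obtain Proposition \ref{prop:1} as an immediate corollary of Theorem \ref{thrm:1}. Recall that propositional entailment $\Delta\models\alpha$ was defined just above Theorem \ref{thrm:1} as: for all valuation functions $v$, if $\Delta$ is true in $v$ then $\alpha$ is true in $v$. Theorem \ref{thrm:1} characterises $\Delta\vapprox_{1}\alpha$ by the same condition, except that the universal quantifier ranges only over those $v$ with $p(v)\neq 0$. Hence the only discrepancy between the two notions concerns zero-probability valuation functions, and the hypothesis of Proposition \ref{prop:1} --- that no valuation function has probability $0$ --- closes precisely this gap: under it, the restricted quantifier ``for all $v$ with $p(v)\neq 0$'' and the unrestricted ``for all $v$'' are the same. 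Substituting this equivalence into Theorem \ref{thrm:1} gives $\Delta\vapprox_{1}\alpha$ if and only if $\Delta\models\alpha$.

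Concretely, I would first apply Theorem \ref{thrm:1} to rewrite $\Delta\vapprox_{1}\alpha$ as the statement $S$: ``for every valuation function $v$ with $p(v)\neq 0$, if $\Delta$ is true in $v$ then $\alpha$ is true in $v$''. I would then show $S$ is equivalent to $\Delta\models\alpha$ using the assumption: the direction from $\Delta\models\alpha$ to $S$ is immediate, since $S$ merely restricts the claim to a subclass of valuations; the converse holds because, when no $v$ has $p(v)=0$, the side condition $p(v)\neq 0$ in $S$ is satisfied by every $v$ and may therefore be dropped. This settles the proposition in a couple of lines.

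Alternatively, a self-contained proof can be given by mirroring the case analysis used for Theorem \ref{thrm:1}. If $p(\Delta)=0$, then from $p(\Delta)=\sum_{v}\llbracket\Delta\rrbracket_{v}p(v)$ and $p(v)>0$ for all $v$ we get $\llbracket\Delta\rrbracket_{v}=0$ for every $v$, so $\Delta$ is unsatisfiable and both $\Delta\models\alpha$ and $\Delta\vapprox_{1}\alpha$ hold vacuously (the latter directly from Definition \ref{def:BE}). If $p(\Delta)\neq 0$, then $p(\alpha|\Delta)=\frac{\sum_{v}p(v)\llbracket\alpha\rrbracket_{v}\llbracket\Delta\rrbracket_{v}}{\sum_{v}p(v)\llbracket\Delta\rrbracket_{v}}$ equals $1$ exactly when no $v$ with $p(v)\neq 0$ lies in $\llbracket\Delta\rrbracket\setminus\llbracket\alpha\rrbracket$, which under the hypothesis forces $\llbracket\Delta\rrbracket\subseteq\llbracket\alpha\rrbracket$, that is, $\Delta\models\alpha$.

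Since the statement is essentially a restatement of Theorem \ref{thrm:1} under an extra hypothesis, there is no genuinely hard step. The only place demanding a little care is the degenerate situation where $\Delta$ has no models, equivalently $p(\Delta)=0$: one must verify that both sides of the biconditional hold trivially there, and this is exactly where the no-zero-probability hypothesis is needed, since without it $p(\Delta)=0$ could arise even for a satisfiable $\Delta$, breaking the equivalence.
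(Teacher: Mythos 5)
Your proof is correct and follows essentially the same route as the paper: the paper's own argument is just the observation that Theorem \ref{thrm:1} reduces to the definition of $\models$ once the non-zero assumption makes the restriction ``$p(v)\neq 0$'' vacuous, which is exactly your first argument. Your additional self-contained case analysis (including the degenerate case $p(\Delta)=0$) is a fine, slightly more explicit elaboration but adds nothing beyond what Theorem \ref{thrm:1} already provides.
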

\begin{proof}
The definition of the propositional entailment is equivalent to Theorem \ref{thrm:1} under the non-zero assumption.
\end{proof}
No preference is given to valuation functions in the propositional entailment. It corresponds to the Bayesian entailment with the uniform assumption, i.e., $p(v_{1})=p(v_{2})=\cdots=p(v_{N})$. Note that Theorem \ref{thrm:1} and Proposition \ref{prop:1} hold even when the probability distribution over valuation functions is uniform. It is because the uniform assumption is stricter than the non-zero assumption, i.e., $p(v)\neq 0$. The following holds when no assumption is imposed on the probability distribution over valuation functions, 
%
%
%
\begin{proposition}\label{prop:2}
Let $\alpha\in{\cal L}$ be a sentence and $\Delta\subseteq{\cal L}$ be a set of sentences. If $\Delta\vDash\alpha$ holds then $\Delta\vapprox_{1}\alpha$ holds, but not vice versa.
\end{proposition}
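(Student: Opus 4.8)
The plan is to read off the forward implication directly from Theorem~\ref{thrm:1} and to refute the converse with a small counterexample that hides a model of $\Delta$ behind zero probability.

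For the forward direction, assume $\Delta\vDash\alpha$. By the definition of propositional entailment recalled just before Theorem~\ref{thrm:1}, this means: for \emph{every} valuation function $v$, if $\Delta$ is true in $v$ then $\alpha$ is true in $v$. In particular the same implication holds for every $v$ with $p(v)\neq 0$, since these form a subset of all valuation functions. That is exactly the right-hand side of the equivalence in Theorem~\ref{thrm:1}, so $\Delta\vapprox_{1}\alpha$ holds. Note that no assumption on $p$ is used here, which is precisely what distinguishes this statement from Proposition~\ref{prop:1}.

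For the failure of the converse, I would exhibit a distribution and a pair $\Delta,\alpha$ with $\Delta\vapprox_{1}\alpha$ but $\Delta\not\vDash\alpha$. Take ${\cal P}=\{a\}$, so there are two valuation functions, $v_{1}$ with $\llbracket a\rrbracket_{v_{1}}=0$ and $v_{2}$ with $\llbracket a\rrbracket_{v_{2}}=1$, and set $p(v_{1})=0$, $p(v_{2})=1$. Let $\Delta=\emptyset$ and $\alpha=a$. Then $p(a)=\sum_{v}\llbracket a\rrbracket_{v}p(v)=p(v_{2})=1\geq 1$, so $\vapprox_{1}a$, i.e.\ $\Delta\vapprox_{1}\alpha$; but $v_{1}$ makes every sentence of $\Delta$ vacuously true and $a$ false, so $\Delta\not\vDash\alpha$. (One may equally reuse the running example with $p(V)=(0.5,0.2,0,0.3)$: take $\Delta=\{a\land\lnot b\}$ and $\alpha=b$. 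Then $p(\Delta)=p(v_{3})=0$, hence $\Delta\vapprox_{1}b$ by Definition~\ref{def:BE}, while $v_{3}\in\llbracket\Delta\rrbracket\setminus\llbracket b\rrbracket$ gives $\Delta\not\vDash b$.)

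There is no genuine obstacle here; the real work was already done in proving Theorem~\ref{thrm:1}. The only point requiring a little care is that the counterexample must actually witness $\Delta\not\vDash\alpha$ — which needs a valuation in which $\Delta$ holds but $\alpha$ fails — and that every such witnessing valuation (indeed every model of $\Delta$, to make $p(\Delta)=0$ in the second variant) carries zero probability, so that it is invisible to $\vapprox_{1}$ while still defeating classical entailment.
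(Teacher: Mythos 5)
Your proposal is correct and follows the paper's route: the paper proves this proposition simply by declaring it obvious from Theorem~\ref{thrm:1}, which is exactly how you obtain the forward direction. Your explicit zero-probability counterexamples for the failure of the converse merely fill in the detail the paper leaves implicit, so there is nothing further to add.
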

\begin{proof}
This is obvious from Theorem \ref{thrm:1}.
\end{proof}
Since an entailment is defined between a set of sentences and a sentence, both $\models$ and $\vapprox_{\omega}$ are subsets of $Pow({\cal L})\times {\cal L}$. Here, $Pow({\cal L})$ denotes the power set of the language ${\cal L}$. Proposition \ref{prop:2} states that $\models$ is a subset of $\vapprox_{1}$, i.e., $\models~\subseteq~\vapprox_{1}$. It represents that  the propositional entailment is more cautious in entailment than the Bayesian entailment. Moreover, it is obvious from Definition \ref{def:BE} that $\vapprox_{\omega_{1}}~\subseteq~\vapprox_{\omega_{2}}$ holds if $\omega_{2}\leq\omega_{1}$ holds.
\par
In propositional logic, a sentence is said to be a tautology if it is true in all of the valuation functions. A sentence $\alpha$ is thus a tautology if and only if $\models\alpha$ holds. In the Bayesian entailment, $\vapprox_{1}\alpha$ holds if and only if $\alpha$ is true in all of the valuation functions with a non-zero probability.
\begin{proposition}\label{prop:3}
Let $\alpha\in{\cal L}$ be a sentence. $\vapprox_{1}\alpha$ holds if and only if, for all valuation functions $v$ such that $p(v)\neq 0$, $\alpha$ is true in $v$.
\end{proposition}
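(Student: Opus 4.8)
The plan is to derive this as the special case $\Delta=\emptyset$ of Theorem~\ref{thrm:1}. Recall that $\vapprox_{1}\alpha$ is, by the remark following Definition~\ref{def:BE}, exactly $\emptyset\vapprox_{1}\alpha$, and that the empty set of sentences is (vacuously) true in every valuation function. Hence the antecedent ``$\Delta$ is true in $v$'' in Theorem~\ref{thrm:1} holds for every $v$, and the statement of that theorem collapses to ``for all $v$ with $p(v)\neq 0$, $\alpha$ is true in $v$.'' So the proof is essentially one line: instantiate Theorem~\ref{thrm:1} at $\Delta=\emptyset$.

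If one prefers a self-contained argument, I would instead argue directly from Definition~\ref{def:BE} and Definition~\ref{def:int}. Since $p(\emptyset)=\sum_{v}p(v)=1\neq 0$, the clause $p(\Delta)=0$ of Definition~\ref{def:BE} never applies, so $\vapprox_{1}\alpha$ holds iff $p(\alpha)\geq 1$, i.e.\ iff $p(\alpha)=1$. Using $p(\alpha)=\sum_{v}\llbracket\alpha\rrbracket_{v}\,p(v)$ and $\sum_{v}p(v)=1$, the equality $p(\alpha)=1$ rewrites as $\sum_{v}(1-\llbracket\alpha\rrbracket_{v})p(v)=0$. Because every summand is nonnegative, this holds iff for each $v$ either $p(v)=0$ or $\llbracket\alpha\rrbracket_{v}=1$, which is precisely the claim that $\alpha$ is true in every valuation function of nonzero probability.

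There is no real obstacle here; the only thing to be careful about is making explicit that $\vapprox_{1}\alpha$ abbreviates $\emptyset\vapprox_{1}\alpha$ and that $p(\emptyset)>0$, so that one is genuinely in the ``$p(\Delta)\neq 0$'' branch and the degenerate case of Definition~\ref{def:BE} plays no role. I would present the one-line reduction to Theorem~\ref{thrm:1} as the proof, perhaps noting the direct computation parenthetically.
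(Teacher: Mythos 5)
Your proposal is correct, but your primary route differs from the paper's. The paper proves Proposition~\ref{prop:3} directly: it computes $p(\alpha)=\sum_{v}\llbracket\alpha\rrbracket_{v}p(v)=\sum_{v\in\llbracket\alpha\rrbracket}p(v)$ and observes, since $\sum_{v}p(v)=1$, that this sum differs from $1$ exactly when some $v^{*}\notin\llbracket\alpha\rrbracket$ has $p(v^{*})\neq 0$ --- which is essentially your second, self-contained argument (your rewriting via $\sum_{v}(1-\llbracket\alpha\rrbracket_{v})p(v)=0$ is the same computation phrased contrapositively). Your main proof instead instantiates Theorem~\ref{thrm:1} at $\Delta=\emptyset$, using that $\vapprox_{1}\alpha$ abbreviates $\emptyset\vapprox_{1}\alpha$, that the empty set is vacuously true in every valuation, and that $p(\emptyset)=\sum_{v}p(v)=1$ so the degenerate clause of Definition~\ref{def:BE} is irrelevant; this is legitimate, since Theorem~\ref{thrm:1} places no restriction excluding $\Delta=\emptyset$, and it buys you a one-line corollary that makes the logical dependence on the theorem explicit. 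The trade-off is that the reduction leans on the unstated conventions you rightly flag (the meaning of $\vapprox_{1}\alpha$ and of $p(\emptyset)$), whereas the paper's direct computation is self-contained and mirrors the proofs of the neighbouring results; either presentation is acceptable.
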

\begin{proof}
$p(\alpha)=\sum_{v}\llbracket\alpha\rrbracket_{v}p(v)=\sum_{v\in\llbracket\alpha\rrbracket}p(v)$. Since $\sum_{v}p(v)= 1$, $\sum_{v\in\llbracket\alpha\rrbracket}p(v)\neq 1$ holds if and only if there is $v^{*}$ such that $v^{*}\notin\llbracket\alpha\rrbracket$ and $p(v^{*})\neq0$.
\end{proof}
%
\begin{example}[Continued]
$\vapprox_{1}\lnot a\lor b$ holds because we have
\begin{eqnarray*}
p(\lnot a\lor b)=\sum_{v\in\llbracket\lnot a\lor b\rrbracket}p(v)=p(v_{1})+p(v_{2})+p(v_{4})=1.
\end{eqnarray*}
\end{example}
\if0
\begin{example}[Propositional entailment]
Given a set $\{a,b\}$ of atomic formulas in ${\cal L}$, let us consider the following valuations $v_{i}$ and their prior $\phi_{l(v_{i})}$, for all $i (1\leq i\leq 4)$. 
%
\begin{itemize}
\item $p(v_{1})=0$ where $v_{1}(a)=1$ and $v_{1}(b)=1$
\item $p(v_{2})=0$ where $v_{2}(a)=1$ and $v_{2}(b)=0$
\item $p(v_{3})=0$ where $v_{3}(a)=0$ and $v_{3}(b)=1$
\item $p(v_{4})=1$ where $v_{4}(a)=0$ and $v_{4}(b)=0$
\end{itemize}
For example, $\vDash a\lor\lnot a$ and $\vapprox_{\omega=1}a\lor\lnot a$ hold because $p(a\lor\lnot a)=\sum_{v}p(a\lor\lnot a|v)p(v)=p(a\lor\lnot a|v_{4})=1$ holds. However, $\vapprox_{\omega=1}\lnot a\land\lnot b$ but $\not\vDash\lnot a\land\lnot b$ hold because $p(\lnot a\land\lnot b)=\sum_{v}p(\lnot a\land\lnot b|v)p(v)=p(\lnot a\land\lnot b|v_{4})=1$ holds. Note that $\not\vapprox_{\omega=1}\lnot a\land\lnot b$ holds when $\phi\not\ni0$ is assumed.
\end{example}
\fi
\subsection{Monotonic Consequence Relation}
We investigate inferential properties of the Bayesian entailment in terms of the monotonic consequence relation. It is known that any monotonic consequence relation can be characterised by the three properties: \textit{Reflexivity}, \textit{Monotony} and \textit{Cut}. Let $\vdash\subseteq Pow({\cal L})\times{\cal L}$ denote a consequence relation on ${\cal L}$. Those properties are defined as follows, where $\alpha, \beta\in{\cal L}$ and $\Delta\subseteq{\cal L}$.
%
\begin{itemize}
\item Reflexivity: $\forall\Delta\forall\alpha$, $\Delta,\alpha\vdash \alpha$
\item Monotony: $\forall\Delta\forall\alpha\forall\beta$, if $\Delta\vdash\alpha$ then $\Delta,\beta\vdash\alpha$
\item Cut: $\forall\Delta\forall\alpha\forall\beta$, if $\Delta\vdash\beta$ and $\Delta,\beta\vdash\alpha$ then $\Delta\vdash\alpha$
\end{itemize}
Reflexivity states that $\alpha$ is a consequence of any set with $\alpha$. Monotony states that if $\alpha$ is a consequence of $\Delta$ then it is a consequence of any superset of $\Delta$ as well. Cut states that an addition of any consequence of $\Delta$ to $\Delta$ does not reduce any consequence of $\Delta$. The Bayesian entailment $\vapprox_{1}$ is classical in the sense that it satisfies all of the properties.
%
\begin{theorem}\label{thrm:2}
The Bayesian entailment $\vapprox_{1}$ satisfies Reflexivity, Monotony and Cut.
\end{theorem}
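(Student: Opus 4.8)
The plan is to reduce everything to Theorem~\ref{thrm:1}. Write $W=\{v:p(v)\neq 0\}$ for the set of valuation functions with non-zero probability, and for a set $\Gamma\subseteq{\cal L}$ let $\llbracket\Gamma\rrbracket$ denote the set of valuation functions in which every sentence of $\Gamma$ is true. Theorem~\ref{thrm:1} says precisely that $\Gamma\vapprox_{1}\alpha$ holds if and only if $\llbracket\Gamma\rrbracket\cap W\subseteq\llbracket\alpha\rrbracket$. So it is enough to verify that this restricted semantic entailment is closed under the three conditions, each of which becomes a one-line set-inclusion observation exactly as for classical propositional entailment.

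Concretely, I would argue as follows. For Reflexivity, note $\llbracket\Delta\cup\{\alpha\}\rrbracket\subseteq\llbracket\alpha\rrbracket$ by definition, hence $\llbracket\Delta\cup\{\alpha\}\rrbracket\cap W\subseteq\llbracket\alpha\rrbracket$, so $\Delta,\alpha\vapprox_{1}\alpha$. For Monotony, assume $\Delta\vapprox_{1}\alpha$, i.e. $\llbracket\Delta\rrbracket\cap W\subseteq\llbracket\alpha\rrbracket$; since $\llbracket\Delta\cup\{\beta\}\rrbracket\subseteq\llbracket\Delta\rrbracket$ we get $\llbracket\Delta\cup\{\beta\}\rrbracket\cap W\subseteq\llbracket\Delta\rrbracket\cap W\subseteq\llbracket\alpha\rrbracket$, i.e. $\Delta,\beta\vapprox_{1}\alpha$. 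For Cut, assume $\Delta\vapprox_{1}\beta$ and $\Delta,\beta\vapprox_{1}\alpha$, and take any $v\in\llbracket\Delta\rrbracket\cap W$; then $\Delta\vapprox_{1}\beta$ gives $v\in\llbracket\beta\rrbracket$, so $v\in\llbracket\Delta\cup\{\beta\}\rrbracket\cap W$, and $\Delta,\beta\vapprox_{1}\alpha$ then gives $v\in\llbracket\alpha\rrbracket$; hence $\llbracket\Delta\rrbracket\cap W\subseteq\llbracket\alpha\rrbracket$, i.e. $\Delta\vapprox_{1}\alpha$.

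There is essentially no obstacle here; the only point that needs a moment's care is the degenerate case $p(\Delta)=0$, equivalently $\llbracket\Delta\rrbracket\cap W=\emptyset$. But that case is already folded into the biconditional of Theorem~\ref{thrm:1} (the empty set is contained in everything), so the inclusions above cover it with no special casing. If one preferred to avoid Theorem~\ref{thrm:1} and work straight from Definition~\ref{def:BE}, the same three arguments would go through by expanding $p(\alpha\mid\Gamma)$ as the ratio $\big(\sum_{v\in\llbracket\Gamma\rrbracket}p(v)\big)^{-1}\sum_{v\in\llbracket\alpha\rrbracket\cap\llbracket\Gamma\rrbracket}p(v)$ and tracking when numerator equals denominator, but routing through Theorem~\ref{thrm:1} keeps the bookkeeping shortest.
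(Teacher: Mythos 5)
Your proof is correct, and it takes a somewhat cleaner route than the paper's. The paper proves Reflexivity by appealing to $\models\;\subseteq\;\vapprox_{1}$, and then handles Monotony and Cut by going back to the probabilities: it uses one direction of the characterization (every $v\in\llbracket\Delta\rrbracket\setminus\llbracket\alpha\rrbracket$ has $p(v)=0$) and then explicitly expands $p(\alpha|\Delta,\beta)$ as a ratio of sums over valuations, discarding the zero-probability terms to show the ratio equals $1$ (with the degenerate case $p(\Delta,\beta)=0$ left implicit, covered by the $p(\cdot)=0$ clause of Definition~\ref{def:BE}). You instead use Theorem~\ref{thrm:1} as a full biconditional, so that $\Gamma\vapprox_{1}\alpha$ becomes the inclusion $\llbracket\Gamma\rrbracket\cap W\subseteq\llbracket\alpha\rrbracket$ with $W$ the nonzero-probability valuations, and all three properties reduce to one-line set-theoretic facts exactly as in the classical semantics; no conditional probabilities need to be computed at all, and the degenerate case is absorbed automatically since the empty set is contained in everything. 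The underlying idea (ignore zero-probability valuations, then argue classically) is the same in both arguments, but your packaging avoids the bookkeeping with ratios and uniformly covers the division-by-zero case that the paper's calculation glosses over, at the cost of leaning entirely on Theorem~\ref{thrm:1}, which the paper only uses in one direction.
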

\begin{proof}
(Reflexivity) It is true because $\models~\subseteq~\vapprox_{1}$ holds. (Monotony) Since $\Delta\vapprox_{1}\alpha$ holds, $\llbracket\Delta\rrbracket\subseteq\llbracket\alpha\rrbracket$ or $p(v)=0$ holds, for all $v\in\llbracket\Delta\rrbracket\setminus\llbracket\alpha\rrbracket$. For all $v\notin\llbracket\Delta\rrbracket\setminus\llbracket\alpha\rrbracket$, it is thus true that if $v\in\llbracket\Delta\rrbracket$ holds then $v\in\llbracket\alpha\rrbracket$. Therefore,
\begin{eqnarray*}
p(\alpha|\Delta,\beta)=\frac{p(\alpha,\Delta,\beta)}{p(\Delta,\beta)}=\frac{\sum_{v}\llbracket\alpha\rrbracket_{v}\llbracket\Delta\rrbracket_{v}\llbracket\beta\rrbracket_{v}p(v)}{\sum_{v}\llbracket\Delta\rrbracket_{v}\llbracket\beta\rrbracket_{v}p(v)}=\frac{\sum_{v\notin\llbracket\Delta\rrbracket\setminus\llbracket\alpha\rrbracket}\llbracket\Delta\rrbracket_{v}\llbracket\beta\rrbracket_{v}p(v)}{\sum_{v\notin\llbracket\Delta\rrbracket\setminus\llbracket\alpha\rrbracket}\llbracket\Delta\rrbracket_{v}\llbracket\beta\rrbracket_{v}p(v)}=1.
\end{eqnarray*}
Here, we have excluded all $v\in\llbracket\Delta\rrbracket\setminus\llbracket\alpha\rrbracket$ because of $p(v)=0$. (Cut) Since $\Delta\vapprox_{1}\beta$ holds, $\llbracket\Delta\rrbracket\subseteq\llbracket\beta\rrbracket$ or $p(v)=0$ holds, for all $v\in\llbracket\Delta\rrbracket\setminus\llbracket\beta\rrbracket$. Since $\Delta,\beta\vapprox_{1}\alpha$ holds, $\llbracket\Delta,\beta\rrbracket\subseteq\llbracket\alpha\rrbracket$ or $p(v)=0$ holds, for all $v\in\llbracket\Delta,\beta\rrbracket\setminus\llbracket\alpha\rrbracket$. Let $X=(\llbracket\Delta\rrbracket\setminus\llbracket\beta\rrbracket)\cup(\llbracket\Delta,\beta\rrbracket\setminus\llbracket\alpha\rrbracket)$. For all $v\notin X$, it is thus true that if $v\in\llbracket\Delta\rrbracket$ holds then $v\in\llbracket\beta\rrbracket$ and $v\in\llbracket\alpha\rrbracket$ hold. We thus have
%
\begin{eqnarray*}
p(\alpha|\Delta,\beta)=\frac{p(\alpha,\Delta,\beta)}{p(\Delta,\beta)}=\frac{\sum_{v}\llbracket\alpha\rrbracket_{v}\llbracket\Delta\rrbracket_{v}\llbracket\beta\rrbracket_{v}p(v)}{\sum_{v}\llbracket\Delta\rrbracket_{v}\llbracket\beta\rrbracket_{v}p(v)}=\frac{\sum_{v\notin X}\llbracket\Delta\rrbracket_{v}p(v)}{\sum_{v\notin X}\llbracket\Delta\rrbracket_{v}p(v)}=1.
\end{eqnarray*}
\end{proof}
\par
The next theorem states inferential properties of the Bayesian entailment $\vapprox_{\omega}$ with probability $\omega$ where $0.5<\omega<1$.
\begin{theorem}\label{thrm:3}
Let $\omega$ be a probability where $0.5<\omega<1$. The Bayesian entailment $\vapprox_{\omega}$ satisfies Reflexivity, but does not satisfy Monotony and Cut.
\end{theorem}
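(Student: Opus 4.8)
The plan is to prove Reflexivity directly and to refute Monotony and Cut by explicit counterexamples, since these are the cleanest way to establish failure of a universally-quantified property.

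For Reflexivity, I would argue exactly as in Theorem \ref{thrm:2}: for any $\Delta$ and $\alpha$, either $p(\Delta,\alpha)=0$, in which case $\Delta,\alpha\vapprox_{\omega}\alpha$ holds by the second clause of Definition \ref{def:BE}, or $p(\Delta,\alpha)\neq 0$, in which case $\llbracket\Delta,\alpha\rrbracket\subseteq\llbracket\alpha\rrbracket$ gives $p(\alpha\mid\Delta,\alpha)=1\geq\omega$. So $\Delta,\alpha\vapprox_{\omega}\alpha$ always holds, and in fact this works for every $\omega\in[0,1]$, not just $0.5<\omega<1$.

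For the failure of Monotony, I would exhibit a small propositional signature (two symbols $a,b$ suffice) and a probability distribution over the four valuations such that $p(\alpha)\geq\omega$ but $p(\alpha\mid\beta)<\omega$ for a suitable $\beta$. Concretely: take $\alpha=a$ and $\beta=\neg a$. Choose a distribution with $p(\llbracket a\rrbracket)=\omega$ (or slightly more), so that $\vapprox_{\omega}a$ holds, yet $p(a\mid\neg a)=0<\omega$, so $\neg a\not\vapprox_{\omega}a$; this already breaks Monotony with $\Delta=\emptyset$. One must only check $p(\neg a)\neq 0$ so the conditional is defined, which is arranged by putting positive mass on a valuation where $a$ is false. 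For the failure of Cut, I would again work over $\{a,b\}$ and pick $\Delta=\emptyset$, a $\beta$ with $p(\beta)\geq\omega$, and an $\alpha$ with $p(\alpha\mid\beta)\geq\omega$ but $p(\alpha)<\omega$: e.g. choose the distribution so that $p(b)\geq\omega$, and $\alpha$ is a sentence true on exactly the $b$-valuations carrying most of the mass, so that conditioning on $b$ boosts $p(\alpha)$ above $\omega$ while unconditionally $p(\alpha)<\omega$. Then $\vapprox_{\omega}\beta$ and $\beta\vapprox_{\omega}\alpha$ both hold but $\vapprox_{\omega}\alpha$ fails. I would present both counterexamples with an explicit numeric distribution (analogous to the running example's $(0.5,0.2,0,0.3)$) so the inequalities are transparent.

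The main obstacle is not conceptual but bookkeeping: I must choose a single pair of numeric distributions (one for the Monotony counterexample, one for Cut) for which all the relevant probabilities simultaneously land on the correct side of the threshold $\omega$ across the whole open interval $(0.5,1)$ — i.e. the counterexamples should be uniform in $\omega$, not tuned to one value. The role of the hypothesis $0.5<\omega$ is what makes this delicate: for Cut I need a sentence whose unconditional probability is below $\omega$ yet can be lifted above $\omega$ by conditioning, and the gap available shrinks as $\omega\to 1$, so I would parametrize the distribution by $\omega$ (e.g. mass $\omega$ on one valuation, $1-\omega$ split appropriately) rather than fixing numbers, and verify the strict inequalities hold for all $\omega$ in the stated range. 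Checking that $p(\Delta)\neq 0$ in each constructed case, so that Definition \ref{def:BE}'s escape clause does not vacuously rescue the property, is the other point that needs care.
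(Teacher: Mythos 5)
Your proposal is correct and takes essentially the same route as the paper: Reflexivity is handled as an easy consequence of the definition (the paper cites $\models\,\subseteq\,\vapprox_{1}\,\subseteq\,\vapprox_{\omega}$, you argue the two cases of Definition \ref{def:BE} directly), and Monotony and Cut are refuted by explicit $\omega$-parametrized distributions over two propositional symbols, checking in each case that $p(\Delta)\neq 0$. Your Monotony counterexample (mass $\omega$ on $a$-valuations, $1-\omega$ on a $\neg a$-valuation, conditioning on $\neg a$) and your Cut sketch (masses $\omega^{2}$, $\omega(1-\omega)$, $1-\omega$) are the same kind of construction as the paper's Tables \ref{tab:m} and \ref{tab:c}, with your Monotony example being marginally simpler.
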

\begin{proof}
(Reflexivity) Obvious from $\models~\subseteq~\vapprox_{1}~\subseteq~\vapprox_{\omega}$. (Monotony) We show a counter-example. Given the set $\{a,b\}$ of propositional symbols, consider the probability distribution over valuation functions shown in Table \ref{tab:m}. Note that $\sum_{v}p(v)=1$ holds. It is the case that
\begin{eqnarray*}
&&p(a)=p(v_{3})+p(v_{4})=(1-\omega)+(2\omega-1)=\omega\\
&&p(a|b)=\frac{p(v_{4})}{p(v_{2})+p(v_{4})}=\frac{2\omega-1}{(1-\omega)+(2\omega-1)}=\frac{2\omega-1}{\omega}.
\end{eqnarray*}
$\omega>\frac{2\omega-1}{\omega}$ holds if and only if $(\omega-1)^{2}>0$ holds. It thus always true when $0.5<\omega<1$. Therefore, $\vapprox_{\omega}a$ but $b\not\vapprox_{\omega}a$ hold.
(Cut) We show a counter-example. Consider the probability distribution over valuation functions shown in Table \ref{tab:c}. Note that $\sum_{v}p(v)=1$ holds. It is the case that
\begin{eqnarray*}
&&p(a)=p(v_{3})+p(v_{4})=\omega(1-\omega)+\omega^{2}=\omega\\
&&p(a\land b|a)=\frac{p(v_{4})}{p(v_{3})+p(v_{4})}=\frac{\omega^{2}}{\omega(1-\omega)+\omega^{2}}=\omega\\
&&p(a\land b)=p(v_{4})=\omega^{2}
\end{eqnarray*}
$\omega>\omega^{2}$ always true when $0.5<\omega<1$. Therefore, $\vapprox_{\omega} a$ and $a\vapprox_{\omega} a\land b$ hold, but $\not\vapprox_{\omega} a\land b$ holds.
\end{proof}

\begin{table}[t]
\begin{center}
\begin{tabular}{cc}
\begin{minipage}{0.5\hsize}
\caption{Counter-example of Monotony}
\label{tab:m}
\begin{center}
\begin{tabular}{c|c|cc}
& $p(V)$ & $a$ & $b$\\\hline
$v_{1}$ & $0$ & $0$ & $0$\\
$v_{2}$ & $1-\omega$ & $0$ & $1$\\
$v_{3}$ & $1-\omega$ & $1$ & $0$\\
$v_{4}$ & $2\omega-1$ & $1$ & $1$
\end{tabular}
\end{center}
\end{minipage}
\begin{minipage}{0.5\hsize}
\caption{Counter-example of Cut}
\label{tab:c}
\begin{center}
\begin{tabular}{c|c|cc}
& $p(V)$ & $a$ & $b$\\\hline
$v_{1}$ & $0$ & $0$ & $0$\\
$v_{2}$ & $1-\omega$ & $0$ & $1$\\
$v_{3}$ & $\omega(1-\omega)$ & $1$ & $0$\\
$v_{4}$ & $\omega^{2}$ & $1$ & $1$
\end{tabular}
\end{center}
\end{minipage}
\end{tabular}
\end{center}
\end{table}
\begin{example}
Given $\omega=0.8$ in Table \ref{tab:m}, $p(a)=0.8$ and $p(a|b)=0.75$ hold. Monotony does not hold because $\vapprox_{0.8}a$ holds, but $b\vapprox_{0.8}a$ does not hold. Given $\omega=0.8$ in Table \ref{tab:c}, $p(a)=0.8$, $p(a\land b|a)=0.8$ and $p(a\land b)=0.64$ hold. Cut does not hold because $\vapprox_{0.8}a$ and $a\vapprox_{0.8}a\land b$ hold, but $\vapprox_{0.8}a\land b$ does not hold.
\end{example}
Therefore, in contrast to $\vapprox_{1}$, the Bayesian entailment $\vapprox_{\omega}$ is not a monotonic consequence relation, for all $\omega$ where $0.5<\omega<1$.

\if0
(Cut) We show a counter-example. Consider the probability distribution over valuation functions shown in the following table.
\begin{center}
\begin{tabular}{c|c|cc}
& $p(V)$ & $a$ & $b$\\\hline
$v_{1}$ & $0$ & $0$ & $0$\\
$v_{2}$ & $1-\omega$ & $0$ & $1$\\
$v_{3}$ & $\omega(1-\omega)$ & $1$ & $0$\\
$v_{4}$ & $\omega^{2}$ & $1$ & $1$
\end{tabular}
\end{center}
Note that $\sum_{v}p(v)=1$ holds. It is the case that
\begin{eqnarray*}
p(a|a\lor b)&=&p(v_{3})+p(v_{4})}{p(v_{2})+p(v_{3})+p(v_{4})}=\frac{\omega^{2}+\omega(1-\omega)}{\omega^{2}+\omega(1-\omega)+(1-\omega)}=\omega\\
p(a\land b|a,a\lor b)&=&\frac{p(v_{4})}{p(v_{3})+p(v_{4})}=\frac{\omega^{2}}{\omega^{2}+\omega(1-\omega)}=\omega\\
p(a\land b|a\lor b)&=&\frac{p(v_{4})}{p(v_{2})+p(v_{3})+p(v_{4})}=\frac{\omega^{2}}{\omega^{2}+\omega(1-\omega)+(1-\omega)}=\omega^{2}.
\end{eqnarray*}
$\omega>\omega^{2}$ always true when $0.5<\omega<1$. Therefore, $a\lor b\vapprox_{\omega} a$ and $a, a\lor b\vapprox_{\omega} a\land b$ hold, but $a\lor b\not\vapprox_{\omega} a\land b$ hold.
\fi
%
%
%
%
%

\subsection{Non-monotonic Consequence Relation}
%
We next analyze the Bayesian entailment in terms of inferential properties of non-monotonic consequence relations. It is known that there are at leat four core properties characterizing non-monotonic consequence relations:  \textit{Supraclassicality}, \textit{Reflexivity}, \textit{Cautious monotony} and \textit{Cut}. Let $\vsim\subseteq Pow({\cal L})\times{\cal L}$ be a consequence relation on ${\cal L}$. Those properties are formally defined as follows, where $\alpha,\beta\in{\cal L}$ and $\Delta\subseteq{\cal L}$.
%
\begin{itemize}
\item Supraclassicality: $\forall\Delta\forall\alpha$, if $\Delta\vdash\alpha$ then $\Delta\vsim\alpha$
\item Reflexivity: $\forall\Delta\forall\alpha$, $\Delta, \alpha\vsim\alpha$
\item Cautious monotony: $\forall\Delta\forall\alpha\forall\beta$, if $\Delta\vsim\beta$ and $\Delta\vsim\alpha$ then $\Delta,\beta\vsim\alpha$
\item Cut: $\forall\Delta\forall\alpha\forall\beta$, if $\Delta\vsim\beta$ and $\Delta,\beta\vsim\alpha$ then $\Delta\vsim\alpha$
\end{itemize}
We have already discussed reflexivity and cut. Supraclassicality states that the consequence relation extends the monotonic consequence relation. Cautious monotony states that if $\alpha$ is a consequence of $\Delta$ then it is a consequence of supersets of $\Delta$ as well. However, it is weaker than monotony because the supersets are restricted to consequences of $\Delta$. Consequence relations satisfying those properties are often called a \textit{cumulative consequence relation} \cite{brewka:97}.\footnote{This definition is not absolute. The authors \cite{kraus:90} define a cumulative consequence as a relation satisfying Reflexivity, Left logical equivalence, Right weakening, Cut and Cautious monotony.}
%
%
%
%
\begin{theorem}\label{cumulativity}
Let $\omega$ be a probability where $0.5<\omega<1$. The Bayesian entailment $\vapprox_{\omega}$ satisfies Supraclassicality and Reflexivity, but does not satisfy Cautious monotony and Cut.
\end{theorem}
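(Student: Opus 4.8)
The plan is to establish the four claims of Theorem~\ref{cumulativity} separately, reusing as much of the machinery already in place as possible. For \emph{Supraclassicality}, I would argue that if $\Delta\vdash\alpha$ in the classical sense then $\Delta\models\alpha$, hence by Proposition~\ref{prop:2} (or rather the underlying Theorem~\ref{thrm:1}) $\Delta\vapprox_{1}\alpha$, and since $\vapprox_{1}~\subseteq~\vapprox_{\omega}$ whenever $\omega\le 1$ (noted right after Proposition~\ref{prop:2}), we get $\Delta\vapprox_{\omega}\alpha$. For \emph{Reflexivity}, the chain $\models~\subseteq~\vapprox_{1}~\subseteq~\vapprox_{\omega}$ already gives $\Delta,\alpha\vapprox_{\omega}\alpha$, exactly as in the proof of Theorem~\ref{thrm:3}. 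So the first two properties are essentially immediate corollaries of earlier results.

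For the two \emph{negative} claims, the natural move is to recycle the counterexamples from Theorem~\ref{thrm:3}, since Monotony implies Cautious monotony's conclusion is a special case only when the extra premise is itself entailed --- so I need to check that the Theorem~\ref{thrm:3} tables actually satisfy the stronger hypotheses of Cautious monotony and Cut. For \emph{Cautious monotony}, I want $\Delta\vapprox_{\omega}\beta$ and $\Delta\vapprox_{\omega}\alpha$ but $\Delta,\beta\not\vapprox_{\omega}\alpha$. Using Table~\ref{tab:m} with $\Delta=\emptyset$, $\beta=b$, $\alpha=a$: I would compute $p(b)=p(v_2)+p(v_4)=(1-\omega)+(2\omega-1)=\omega\ge\omega$, so $\vapprox_{\omega}b$ holds; $p(a)=\omega$ as computed there, so $\vapprox_{\omega}a$ holds; and $p(a|b)=\frac{2\omega-1}{\omega}<\omega$, so $b\not\vapprox_{\omega}a$. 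That settles Cautious monotony. For \emph{Cut}, Table~\ref{tab:c} with $\Delta=\emptyset$, $\beta=a$, $\alpha=a\land b$ already has exactly the right shape: $\vapprox_{\omega}a$ (shown, $p(a)=\omega$), $a\vapprox_{\omega}a\land b$ (shown, $p(a\land b|a)=\omega$), but $\not\vapprox_{\omega}a\land b$ since $p(a\land b)=\omega^2<\omega$ for $0.5<\omega<1$. So the same two tables do double duty, provided one extra probability computation ($p(b)=\omega$ in Table~\ref{tab:m}) is included.

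The main thing to be careful about --- the only real ``obstacle'' --- is making sure the hypotheses of Cautious monotony are genuinely met by the chosen instance, i.e.\ that \emph{both} $\vapprox_{\omega}b$ and $\vapprox_{\omega}a$ hold in Table~\ref{tab:m}, not just the one already needed for the Monotony counterexample in Theorem~\ref{thrm:3}; the happy coincidence is that $p(b)=\omega$ exactly, so it works with equality. I would also double-check the boundary/degeneracy conditions: in both tables $p(v_1)=0$ but all the conditioning sets ($\{b\}$, $\{a\}$) have positive probability, so the $p(\Delta)=0$ escape clause in Definition~\ref{def:BE} is never triggered and the inequalities are the whole story. Finally I would remark, mirroring the sentence after Theorem~\ref{thrm:3}, that this shows $\vapprox_{\omega}$ fails to be a cumulative consequence relation for every $\omega\in(0.5,1)$, which motivates the weaker ``Classical'' variants introduced next.
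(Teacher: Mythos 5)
Your proposal is correct and follows essentially the same route as the paper: Supraclassicality and Reflexivity from $\models\,\subseteq\,\vapprox_{1}\,\subseteq\,\vapprox_{\omega}$, Cautious monotony refuted by Table~\ref{tab:m} with the additional computation $p(b)=\omega$ alongside $p(a)=\omega$ and $p(a|b)=\frac{2\omega-1}{\omega}<\omega$, and Cut refuted by reusing the Table~\ref{tab:c} counterexample from Theorem~\ref{thrm:3}. Your extra check that the conditioning sets have positive probability (so the $p(\Delta)=0$ clause never fires) is a sensible addition but does not change the argument.
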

\begin{proof}
(Reflexivity \& Cut) See Theorem \ref{thrm:3}. (Supraclassicality) This is obvious from $\models~\subseteq~\vapprox_{\omega}$. (Cautious monotony) It is enough to show a counter-example. Given set $\{a,b\}$ of atomic propositions, consider again the distribution over valuation functions shown in Table \ref{tab:m}.
%
%
We have 
\begin{eqnarray*}
&&p(a)=p(v_{3})+p(v_{4})=(1-\omega)+(2\omega-1)=\omega\\
&&p(b)=p(v_{2})+p(v_{4})=(1-\omega)+(2\omega-1)=\omega\\
&&p(a|b)=\frac{p(v_{4})}{p(v_{2})+p(v_{4})}=\frac{2\omega-1}{\omega}.
\end{eqnarray*}
$\omega\leq\frac{2\omega-1}{\omega}$ holds if and only if $(\omega-1)^{2}\leq 0$ holds. It is always false when $0.5<\omega<1$ holds.
%
\end{proof}
Theorem \ref{cumulativity} shows that, in general, the Bayesian entailment $\vapprox_{\omega}$ is not cumulative. A natural question here is what inferential properties characterize the Bayesian entailment. We thus introduce two properties: \textit{Classically cautious monotony} and \textit{Classical cut}.
\begin{definition}[Classically cautious monotony and classical cut]
Let $\alpha,\beta\in{\cal L}$ be sentences, $\Delta\in{\cal L}$ be a set of sentences, and $\vsim~\subseteq~Pow({\cal L})\times{\cal L}$ be a consequence relation on ${\cal L}$. Classically cautious monotony and Classical cut are given as follows:
%
\begin{itemize}
\item Classically cautious monotony: $\forall\Delta\forall\alpha\forall\beta$, if $\Delta\vdash\beta$ and $\Delta\vsim\alpha$ then $\Delta,\beta\vsim\alpha$
\item Classical cut: $\forall\Delta\forall\alpha\forall\beta$, if $\Delta\vdash\beta$ and $\Delta,\beta\vsim\alpha$ then $\Delta\vsim\alpha$
\end{itemize}
where $\vdash~\subseteq~Pow({\cal L})\times{\cal L}$ denotes a monotonic consequence relation.
\end{definition}
%
%
Intuitively speaking, Classically cautious monotony and Cut state that only monotonic consequences may be used as premises of the next inference operation. It is in contrast to Cautious monotony and Cut stating that any consequences may be used as premises of the next operation. Classically cautious monotony is weaker than Cautious monotony, and Cautious monotony is weaker than Monotony. Thus, if a consequence relation satisfies Monotony then it satisfies Classically cautious monotony and Cautious monotony as well. We now define a \textit{classically cumulative consequence relation}.
%
%
%
\begin{definition}[Classically cumulative consequence relation]
Let $\vsim~\subseteq~Pow({\cal L})\times{\cal L}$ be a consequence relation on ${\cal L}$. $\vsim$ is said to be a classically cumulative consequence relation if it satisfies all of the following properties.
\begin{itemize}
\item Supraclassicality: $\forall\Delta\forall\alpha$, if $\Delta\vdash\alpha$ then $\Delta\vsim\alpha$
\item Reflexivity: $\forall\Delta\forall\alpha$, $\Delta, \alpha\vsim\alpha$
\item Classically cautious monotony: $\forall\Delta\forall\alpha\forall\beta$, if $\Delta\vdash\beta$ and $\Delta\vsim\alpha$ then $\Delta,\beta\vsim\alpha$
\item Classical cut: $\forall\Delta\forall\alpha\forall\beta$, if $\Delta\vdash\beta$ and $\Delta,\beta\vsim\alpha$ then $\Delta\vsim\alpha$
\end{itemize}
\end{definition}
Any cumulative consequence relation is a classically cumulative consequence relation, but not vice versa. A classically cumulative consequence relation is more conservative in entailment than a cumulative consequence relation. For example, in a cumulative consequence relation $\vsim$, if $(\emptyset,\beta)\in\vsim$ and $(\emptyset,\alpha)\in\vsim$ hold then $(\{\beta\},\alpha)\in\vsim$ necessary holds due to Cautious monotony. However, it does not hold in a classically cumulative consequence relation because $(\emptyset,\beta)\notin\vdash$ might be the case. The same discussion is possible for Cut and Classical cut.

\par
The next theorem shows that Bayesian entailment $\vapprox_{\omega}$ (where $0.5<\omega<1$) is a classically cumulative consequence relation.
%
\begin{theorem}\label{semicumulativity}
Let $\omega$ be a probability where $0.5<\omega<1$. The Bayesian entailment $\vapprox_{\omega}$ is a classically cumulative consequence relation.
\end{theorem}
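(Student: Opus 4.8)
The claim is that $\vapprox_\omega$ (for $0.5 < \omega < 1$) satisfies Supraclassicality, Reflexivity, Classically cautious monotony, and Classical cut. The plan is to dispatch the four properties in turn, reusing what's already been established.

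First, Supraclassicality and Reflexivity are immediate: both follow from the chain $\models\ \subseteq\ \vapprox_1\ \subseteq\ \vapprox_\omega$ (Proposition \ref{prop:2} together with the monotonicity of $\vapprox_\omega$ in $\omega$ noted after Proposition \ref{prop:2}), exactly as in the proof of Theorem \ref{cumulativity}. So I would just cite those.

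The substantive work is Classically cautious monotony and Classical cut. Both have the hypothesis $\Delta\vdash\beta$, i.e.\ $\Delta\models\beta$, which by definition of propositional entailment means $\llbracket\Delta\rrbracket\subseteq\llbracket\beta\rrbracket$. The key observation is that this makes $\beta$ redundant as a conditioning event in the Bayesian computation: $\llbracket\Delta\rrbracket_v\cdot\llbracket\beta\rrbracket_v = \llbracket\Delta\rrbracket_v$ for every $v$, since whenever $\llbracket\Delta\rrbracket_v = 1$ we also have $\llbracket\beta\rrbracket_v = 1$. Hence
\begin{eqnarray*}
p(\alpha\mid\Delta,\beta)=\frac{\sum_v \llbracket\alpha\rrbracket_v\llbracket\Delta\rrbracket_v\llbracket\beta\rrbracket_v\,p(v)}{\sum_v \llbracket\Delta\rrbracket_v\llbracket\beta\rrbracket_v\,p(v)}=\frac{\sum_v \llbracket\alpha\rrbracket_v\llbracket\Delta\rrbracket_v\,p(v)}{\sum_v \llbracket\Delta\rrbracket_v\,p(v)}=p(\alpha\mid\Delta),
\end{eqnarray*}
and likewise $p(\Delta,\beta)=p(\Delta)$, so $p(\Delta,\beta)=0$ iff $p(\Delta)=0$. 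Therefore the conditions "$p(\alpha\mid\Delta,\beta)\ge\omega$ or $p(\Delta,\beta)=0$" and "$p(\alpha\mid\Delta)\ge\omega$ or $p(\Delta)=0$" are literally the same condition. I should be careful to handle the degenerate case $\llbracket\Delta\rrbracket = \emptyset$ separately (or note it is subsumed): there $\Delta\models\beta$ holds vacuously, $p(\Delta)=0=p(\Delta,\beta)$, and both sides of each implication hold by the $p(\Delta)=0$ clause of Definition \ref{def:BE}.

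Given this equivalence, both properties fall out at once. For Classically cautious monotony: assume $\Delta\vdash\beta$ and $\Delta\vapprox_\omega\alpha$; the latter says $p(\alpha\mid\Delta)\ge\omega$ or $p(\Delta)=0$, which by the equivalence is exactly $p(\alpha\mid\Delta,\beta)\ge\omega$ or $p(\Delta,\beta)=0$, i.e.\ $\Delta,\beta\vapprox_\omega\alpha$. For Classical cut: assume $\Delta\vdash\beta$ and $\Delta,\beta\vapprox_\omega\alpha$; the same equivalence turns the second hypothesis into $\Delta\vapprox_\omega\alpha$. I don't expect any real obstacle here — the only thing to get right is the bookkeeping on the empty-model/zero-probability case and stating clearly that $\Delta\models\beta$ is precisely what licenses dropping $\beta$ from the conditioning event. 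Note also that the $0.5<\omega<1$ restriction is not actually needed for this theorem (it was needed for the negative results in Theorem \ref{cumulativity}); the argument works for any $\omega\in[0,1]$, so I would either keep the hypothesis as stated for uniformity or remark that it is inessential.
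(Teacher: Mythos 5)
Your proposal is correct and follows essentially the same route as the paper: Supraclassicality and Reflexivity are inherited via $\models\ \subseteq\ \vapprox_{1}\ \subseteq\ \vapprox_{\omega}$, and the two classical properties are proved by using $\Delta\vdash\beta$ (i.e.\ $\llbracket\Delta\rrbracket\subseteq\llbracket\beta\rrbracket$) to drop $\beta$ from the conditioning event, giving $p(\alpha|\Delta,\beta)=p(\alpha|\Delta)$ together with the zero-probability case, which is exactly the paper's argument. Your additional observations, that $p(\Delta,\beta)=p(\Delta)$ makes the degenerate cases coincide and that the restriction $0.5<\omega<1$ is inessential here, are minor refinements of the same proof.
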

\begin{proof}
(Supraclassicality \& Reflexivity) See Theorem \ref{cumulativity}. (Classically cautious monotony \& Classical cut) We prove both by showing that $\Delta,\beta\vsim\alpha$ holds if and only if $\Delta\vsim\alpha$ holds, given $\Delta\vdash\beta$ holds. If $p(\Delta)=0$ holds then $\Delta\vdash\beta$, $\Delta\vsim\alpha$ and $\Delta,\beta\vsim\alpha$ obviously hold from the definition. If $p(\Delta)\neq 0$ then we have 
\begin{eqnarray*}
p(\alpha|\Delta,\beta)=\frac{\sum_{v}\llbracket\alpha\rrbracket_{v}\llbracket\Delta\rrbracket_{v}\llbracket\beta\rrbracket_{v}p(v)}{\sum_{v}\llbracket\Delta\rrbracket_{v}\llbracket\beta\rrbracket_{v}p(v)}=\frac{\sum_{v}\llbracket\alpha\rrbracket_{v}\llbracket\Delta\rrbracket_{v}p(v)}{\sum_{v}\llbracket\Delta\rrbracket_{v}p(v)}=p(\alpha|\Delta).
\end{eqnarray*}
We here used the facts $\llbracket\Delta\rrbracket\subseteq\llbracket\beta\rrbracket$ and $p(\Delta,\beta)\neq 0$. $\llbracket\Delta\rrbracket\subseteq\llbracket\beta\rrbracket$ is true because $\Delta\vdash\beta$. $p(\Delta,\beta)\neq 0$ is true because $p(\Delta)\neq0$ and $\llbracket\Delta\rrbracket\subseteq\llbracket\beta\rrbracket$.
\end{proof}
%
Note that it makes no sense to show that any classically cumulative consequence relation is the Bayesian entailment. It corresponds to show that any monotonic consequence relation is the propositional entailment. The classically cumulative consequence relation is a meta-theory used to characterize various logical systems with specific logical language, syntax and semantics.
%
%
%
\begin{example}\label{ce1}
Given $\omega=0.8$ in Table \ref{tab:m}, $p(a\lor b)=1$, $p(a)=0.8$ and $p(a|a\lor b)=0.8$. It is thus the case that $\vdash a\lor b$, $\vapprox_{0.8}a$ and $a\lor b\vapprox_{0.8}a$.
\end{example}
%
\subsection{Preferential Entailment}
The preferential entailment \cite{shoham:87} is a representative approach to a non-monotonic consequence relation. We show that the preferential entailment coincides with a maximum a posteriori entailment, which is an approximation of the Bayesian entailment. The preferential entailment is defined on a preferential structure (or preferential model) $({\cal V},\succ)$, where ${\cal V}$ is a set of valuation functions and $\succ$ is an irreflexive and transitive relation on ${\cal V}$. $v_{1}\succ v_{2}$ represents that $v_{1}$ is preferable\footnote{For the sake of simplicity, we do not adopt the common practice in logic that $v_{2}\succ v_{1}$ denotes $v_{1}$ is preferable to $v_{2}$.} to $v_{2}$ in the sense that the world identified by $v_{1}$ is more normal/typical/natural than the one identified by $v_{2}$. Given preferential structure $({\cal V},\succ)$, $\alpha$ is a preferential consequence of $\Delta$, denoted by $\Delta\vsim_{({\cal V},\succ)}\alpha$, if $\alpha$ is true in all $\succ$-maximal\footnote{$\succ$ has to be smooth (or stuttered) \cite{kraus:90} so that a maximal model certainly exists. That is, for all valuations $v$, either $v$ is $\succ$-maximal or there is a $\succ$-maximal valuation $v' $such that $v'\succ v$.} models of $\Delta$. A consequence relation $\vsim~\subseteq~Pow({\cal L})\times{\cal L}$ is said to be \textit{preferential} \cite{kraus:90} if it satisfies the following Or property, as well as Reflexivity, Cut and Cautious monotony, where $\Delta\subseteq{\cal L}$ and $\alpha,\beta,\gamma\in{\cal L}$.
\begin{itemize}
\item Or: $\forall\Delta\forall\alpha\forall\beta\forall\gamma$, if $\Delta,\alpha\vsim\gamma$ and $\Delta,\beta\vsim\gamma$ then $\Delta,\alpha\lor\beta\vsim\gamma$
\end{itemize}
\par
In line with the fact that maximum a posterior estimation is an approximation of Bayesian estimation, we define a maximum a posteriori (MAP) entailment, denoted by $\vapprox_{MAP}$, that is an approximation of the Bayesian entailment. Several concepts need to be introduced for that. $v_{MAP}$ is said to be a maximum a posteriori estimate if it satisfies
\begin{eqnarray*}
v_{MAP}=\argmax_{v}p(v|\Delta).
\end{eqnarray*}
We now assume that the distribution $p(V|\Delta)$ has a unique peak close to 1 at a valuation function. It means that there is a single state of the world that is very normal/natural/typical. It results in
\begin{eqnarray*}
p(V|\Delta)\simeq
\begin{cases}
1& \it{if}~V=v_{MAP}\\
0& otherwise,
\end{cases}
\end{eqnarray*}
where $\simeq$ denotes an approximation. Note that no one accepting MAP estimation can refuse this assumption in terms of the Bayesian perspective. Now, we have
\begin{eqnarray*}
p(\alpha|\Delta)=\sum_{v}p(\alpha|v)p(v|\Delta)\simeq\sum_{v}p(\alpha|v)\delta(v=v_{MAP})=p(\alpha|v_{MAP}),
\end{eqnarray*}
where $\delta$ is the Kronecker delta that returns $1$ if $v=v_{MAP}$ and otherwise $0$. It is the case that $p(\alpha|v_{MAP})\in\{0,1\}$. Thus, a formal definition of the maximum a posteriori entailment is given as follows.
\begin{definition}[Maximum a posteriori entailment]\label{def:MAPE}
Let $\alpha\in{\cal L}$ be a sentence and $\Delta\subseteq{\cal L}$ be a set of sentences. $\alpha$ is a maximum a posteriori entailment of $\Delta$, denoted by $\Delta\vapprox_{MAP}\alpha$, if $p(\alpha|v_{MAP})=1$ or $p(\Delta)=0$, where $v_{MAP}=\argmax_{v}p(v|\Delta)$.
\end{definition}
\par
Given two ordered sets $(S_{1},\leq_{1})$ and $(S_{2},\leq_{2})$, a function $f$ is said to be an order-preserving (or isotone) map of $(S_{1},\leq_{1})$ into $(S_{2},\leq_{2})$ if $x\leq_{1} y$ implies that $f(x)\leq_{2} f(y)$, for all $x,y\in S_{1}$. The next theorem relates the maximum a posteriori entailment to the preferential entailment.
%
%

\begin{theorem}\label{thrm:MAP1}
Let $({\cal V},\succ)$ be a preferential structure and $p:{\cal V}\rightarrow[0,1]$ be a probability mass function over $V$. If $p$ is an order-preserving map of $({\cal V},\succ)$ into $([0,1]\geq)$ then $\Delta\vsim_{({\cal V},\succ)}\alpha$ implies $\Delta\vapprox_{MAP}\alpha$.
%
%
\end{theorem}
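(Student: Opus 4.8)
The plan is to show that, whenever $p(\Delta)\neq 0$, the MAP estimate $v_{MAP}$ is itself a $\succ$-maximal model of $\Delta$. The conclusion then follows at once: $\Delta\vsim_{({\cal V},\succ)}\alpha$ says precisely that $\alpha$ is true in every $\succ$-maximal model of $\Delta$, hence in $v_{MAP}$, so $p(\alpha|v_{MAP})=\llbracket\alpha\rrbracket_{v_{MAP}}=1$ and therefore $\Delta\vapprox_{MAP}\alpha$ by Definition \ref{def:MAPE}.

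First I would dispose of the degenerate case $p(\Delta)=0$, in which $\Delta\vapprox_{MAP}\alpha$ holds trivially by Definition \ref{def:MAPE}, for every $\alpha$. So assume $p(\Delta)\neq 0$. Using the identity $p(v|\Delta)=p(v)\llbracket\Delta\rrbracket_{v}/p(\Delta)$ derived just before Definition \ref{def:BE}, the conditional distribution $p(V|\Delta)$ is supported on $\llbracket\Delta\rrbracket$ and, restricted to that set, is proportional to $p(V)$. Hence $v_{MAP}=\argmax_{v}p(v|\Delta)$ is a model of $\Delta$ with $p(v_{MAP})=\max_{v\in\llbracket\Delta\rrbracket}p(v)>0$; it is well-defined and unique under the standing assumption, made just before Definition \ref{def:MAPE}, that $p(V|\Delta)$ has a unique peak.

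Next I would show that $v_{MAP}$ is $\succ$-maximal among the models of $\Delta$. Suppose not: then there is $v'\in\llbracket\Delta\rrbracket$ with $v'\succ v_{MAP}$. Since $p$ is an order-preserving map of $({\cal V},\succ)$ into $([0,1],\geq)$, the relation $v'\succ v_{MAP}$ gives $p(v')\geq p(v_{MAP})$; but $p(v_{MAP})$ is the maximum of $p$ over $\llbracket\Delta\rrbracket$, so $p(v')=p(v_{MAP})$, i.e.\ $v'$ is also a maximiser of $p(V|\Delta)$. As $\succ$ is irreflexive, $v'\neq v_{MAP}$, contradicting uniqueness of the peak. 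Hence no such $v'$ exists, $v_{MAP}$ is a $\succ$-maximal model of $\Delta$, and the argument is closed as above. (If one prefers not to invoke the uniqueness hypothesis, the same computation shows that any $p$-maximal model of $\Delta$ that is $\succ$-maximal within the finite set of $p$-maximal models of $\Delta$ is $\succ$-maximal in $\llbracket\Delta\rrbracket$, so one may simply take $v_{MAP}$ to be such a model.)

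The point needing care is the interaction between the non-strictness of \emph{order-preserving}, which yields only $p(v')\geq p(v_{MAP})$ rather than a strict inequality, and the definition of $v_{MAP}$ as an $\argmax$: closing the gap requires either the unique-peak assumption or the finite tie-breaking remark above. Everything else is routine: the $p(\Delta)=0$ case, the passage from $p(V|\Delta)$ to $p(V)$ on $\llbracket\Delta\rrbracket$, and the final step from $\succ$-maximality of $v_{MAP}$ to $p(\alpha|v_{MAP})=1$. Note that smoothness of $\succ$ plays no role in this direction beyond ensuring that $\Delta\vsim_{({\cal V},\succ)}\alpha$ is not vacuously true.
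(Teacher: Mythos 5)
Your proof is correct, but it runs the key identification in the opposite direction from the paper. The paper fixes an arbitrary $\succ$-maximal model $v^{*}$ of $\Delta$, notes $\llbracket\alpha\rrbracket_{v^{*}}=1$ from the preferential entailment, and then argues $v^{*}=\argmax_{v}\llbracket\Delta\rrbracket_{v}p(v)$, i.e.\ that the chosen maximal model \emph{is} the MAP estimate; you instead start from $v_{MAP}$ and show it is a $\succ$-maximal model of $\Delta$, deducing $p(\alpha|v_{MAP})=1$ from the fact that $\alpha$ holds in all maximal models. The two routes are not symmetric in strength. The paper's step ``if $v$ is $\succ$-maximal then $p(v)$ is maximal'' is only safe when the maximal models are comparable or tie in probability (it is exactly what makes the totally ordered case, Theorem~\ref{thrm:MAP2}, immediate); for a genuinely partial $\succ$ with several maximal models of different probability, an arbitrarily chosen $v^{*}$ need not maximize $p(\cdot|\Delta)$, and your direction sidesteps this: any $v'\succ v_{MAP}$ in $\llbracket\Delta\rrbracket$ would have to tie with $v_{MAP}$, which you rule out via the unique-peak assumption preceding Definition~\ref{def:MAPE} (or your finite tie-breaking remark). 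The price is that your argument leans explicitly on that approximation assumption, whereas the paper's argument, when it applies, identifies the witness directly. Your treatment of the degenerate case via $p(\Delta)=0$ is also slightly more general than the paper's ``$\Delta$ has no model,'' since it covers models all of zero probability.
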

\begin{proof}
It is obviously true when $\Delta$ has no model. Let $v^{*}$ be a $\succ$-maximal model of $\Delta$. It is sufficient to show $p(\alpha|v^{*})=1$ and $v^{*}=\argmax_{v}p(v|\Delta)$. Since $\Delta\vsim_{({\cal V},\succ)}\alpha$, $\alpha$ is true in $v^{*}$. Thus, $p(\alpha|v^{*})=\llbracket\alpha\rrbracket_{v^{*}}=1$. We have
\begin{eqnarray*}
\argmax_{v}p(v|\Delta)=\argmax_{v}p(\Delta|v)p(v)=\argmax_{v}\llbracket\Delta\rrbracket_{v}p(v)
\end{eqnarray*}
Since $\Delta$ is true in $v^{*}$, $\llbracket\Delta\rrbracket_{v^{*}}=1$. Since $p$ is order-preserving, if $v_{1}\succ v_{2}$ then $p(v_{1})\geq p(v_{2})$, for all $v_{1},v_{2}$. Thus, if $v$ is $\succ$-maximal then $p(v)$ is maximal. Therefore, $v^{*}=\argmax_{v}\llbracket\Delta\rrbracket_{v}p(v)$ holds.
\end{proof}
It is the case that $\vsim_{({\cal V},\succ)},\vapprox_{MAP}\subseteq Pow({\cal L})\times{\cal L}$. Theorem \ref{thrm:MAP1} states that $\vsim_{({\cal V},\succ)}~\subseteq~\vapprox_{MAP}$ holds given an appropriate probability distribution over valuation functions.
%
\begin{example}
Suppose the probability distribution over valuation functions shown in the table in Figure \ref{fig:tab} and the preferential structure $({\cal V},\succ)$ given as follows.
\begin{eqnarray*}
{\cal V}&=&\{v_{1},v_{2},v_{3},v_{4}\}\\
\succ&=&\{(v_{1},v_{2}),(v_{1},v_{3}),(v_{1},v_{4}),(v_{3},v_{2}),(v_{4},v_{2})\}
\end{eqnarray*}
The transitivity of $\succ$ is depicted in the graph shown in Figure \ref{fig:tab}. As shown in the graph, the probability mass function $p$ is an order-preserving map of $({\cal V},\succ)$ into $([0,1],\geq)$.
\par
Now, $\{a\lor\lnot b\}\vsim_{({\cal V},\succ)}\lnot b$ holds because $\lnot b$ is true in all $\succ$-maximal models of $\{a\lor\lnot b\}$. Indeed, $\lnot b$ is true in $v_{1}$, which is uniquely $\succ$-maximal in $\{v_{1},v_{3},v_{4}\}$, the models of $a\lor\lnot b$. Meanwhile, $\{a\lor\lnot b\}\vapprox_{MAP}\lnot b$ holds because $p(\lnot b|v_{1})=1$ holds where $v_{1}=\argmax_{v}p(v|a\lor\lnot b)$.
\par
However, $\{a\}\not\vsim_{({\cal V},\succ)}\lnot b$ holds because $\lnot b$ is false in $v_{4}$, which is $\succ$-maximal in $\{v_{3}, v_{4}\}$, the maximal models of $a\lor\lnot b$. In contrast, $\{a\}\vapprox_{MAP}\lnot b$ holds because $p(\lnot b|v_{3})=1$ holds where $v_{3}=\argmax_{v}p(v|a)$.
\end{example}
\begin{figure}[t]
\begin{center}
 \includegraphics[scale=0.4]{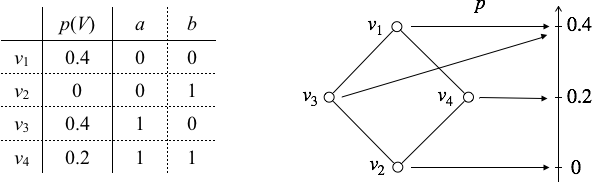}
  \caption{The left table shows the probability distribution over valuation functions. The right graph shows the order-preserving probability mass function that maps each valuation function to its probability.}
  \label{fig:tab}
\end{center}
\end{figure}
The equivalence relation between the maximum a posteriori entailment and the preferential entailment is obtained by restricting the preferential structure to a total order.
%
\begin{theorem}\label{thrm:MAP2}
Let $({\cal V},\succ)$ be a totally-ordered preferential structure and $p:{\cal V}\rightarrow[0,1]$ be a probability mass function over $V$. If $p$ is an order-preserving map of $({\cal V},\succ)$ into $([0,1]\geq)$ then $\Delta\vsim_{({\cal V},\succ)}\alpha$ if and only if $\Delta\vapprox_{MAP}\alpha$.
\end{theorem}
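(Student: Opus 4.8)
The plan is to build on Theorem~\ref{thrm:MAP1}, which already gives the inclusion $\Delta\vsim_{({\cal V},\succ)}\alpha \Rightarrow \Delta\vapprox_{MAP}\alpha$ in the general preferential case. What remains is the converse direction under the extra hypothesis that $\succ$ is a total order, so I would keep the ``only if'' part as a citation of Theorem~\ref{thrm:MAP1} and concentrate entirely on proving ``if''. So suppose $\Delta\vapprox_{MAP}\alpha$; I want $\Delta\vsim_{({\cal V},\succ)}\alpha$. The degenerate case $p(\Delta)=0$ (equivalently $\llbracket\Delta\rrbracket=\emptyset$, using the order-preserving and presumably strictly positive reading, or more carefully: $\Delta$ has no model among the valuations that matter) makes both sides vacuously true, as in the earlier proof, so assume $\llbracket\Delta\rrbracket\neq\emptyset$.

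The key structural point is that when $\succ$ is a \emph{total} order on the finite set ${\cal V}$, the set $\llbracket\Delta\rrbracket$ of models of $\Delta$ has a \emph{unique} $\succ$-maximal element; call it $v^{*}$. Hence $\Delta\vsim_{({\cal V},\succ)}\alpha$ is by definition equivalent to the single condition ``$\alpha$ is true in $v^{*}$'', i.e.\ $\llbracket\alpha\rrbracket_{v^{*}}=1$. On the MAP side, the computation already done in Theorem~\ref{thrm:MAP1} shows
\begin{eqnarray*}
\argmax_{v}p(v|\Delta)=\argmax_{v}\llbracket\Delta\rrbracket_{v}\,p(v),
\end{eqnarray*}
and since $p$ is order-preserving from $({\cal V},\succ)$ into $([0,1],\geq)$ and $v^{*}$ is the unique $\succ$-maximal model of $\Delta$, the value $\llbracket\Delta\rrbracket_{v}\,p(v)$ is maximised exactly at $v=v^{*}$ — here totality of $\succ$ is what forces $p(v^{*})$ to strictly dominate, or at least weakly dominate and be selected, among models of $\Delta$. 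Thus $v_{MAP}=v^{*}$. Now $\Delta\vapprox_{MAP}\alpha$ together with $p(\Delta)\neq 0$ gives $p(\alpha|v_{MAP})=1$, i.e.\ $\llbracket\alpha\rrbracket_{v^{*}}=1$, which is exactly $\Delta\vsim_{({\cal V},\succ)}\alpha$.

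The main obstacle is the uniqueness/selection issue hiding in ``$v_{MAP}=v^{*}$''. Two subtleties must be handled: first, an order-preserving $p$ need not be \emph{injective}, so several models of $\Delta$ could share the maximal probability even though only $v^{*}$ is $\succ$-maximal; one must either observe that $\argmax$ is being used as a set and that $v^{*}$ lies in it (which suffices, since $\Delta\vapprox_{MAP}\alpha$ should then be read as holding for this witness, consistent with how Theorem~\ref{thrm:MAP1} is phrased), or strengthen the statement to an order-\emph{embedding}. Second, one should confirm that totality of $\succ$ genuinely yields a unique $\succ$-maximal model — this uses finiteness of ${\cal V}$ (or the smoothness assumption noted in the footnote) plus transitivity and irreflexivity, so that the restriction of a total order to the nonempty set $\llbracket\Delta\rrbracket$ again has a greatest element. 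Once these two points are pinned down the argument is just the reversal of the chain already established in Theorem~\ref{thrm:MAP1}.
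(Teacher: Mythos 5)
Your proof follows essentially the same route as the paper: cite Theorem~\ref{thrm:MAP1} for the forward direction, use totality of $\succ$ (with finiteness/smoothness) to obtain a unique $\succ$-maximal model $v^{*}$ of $\Delta$, identify $v_{MAP}=v^{*}$ via the computation $\argmax_{v}p(v|\Delta)=\argmax_{v}\llbracket\Delta\rrbracket_{v}p(v)$, and read off the converse. The tie-breaking subtlety you flag (an order-preserving $p$ need not be injective, so the $\argmax$ may not single out $v^{*}$) is genuine, but the paper's own proof glosses over it by simply asserting that only $v^{*}$ attains the maximum.
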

\begin{proof}
Same as Theorem \ref{thrm:MAP1}. Only difference is that the model $v^{*}$ exists uniquely. Then, only $v^{*}$ satisfies $v^{*}=\argmax_{v}\llbracket\Delta\rrbracket_{v}p(v)$.
%
\end{proof}
%

\section{Discussion and Conclusions}\label{sec:dis}
%
There are a lot of research papers combining logic and probability theory, e.g., \cite{adams:98,Fraassen:81b,Morgan:83a,Morgan:82a,Morgan:82b,Morgan:83b,Cross:93,Leblanc:79,Leblanc:83,Fraassen:83,Pearl:91,Goosens:79,Richardson:06}. Their common interest is not the notion of truth preservation but rather probability preservation, where the uncertainty of the premises preserves the uncertainty of the conclusion. They are different from ours because they presuppose and extend classical logical consequence.
\par
Besides the preferential entailment, various other semantics of non-monotonic consequence relations have been proposed such as plausibility structure \cite{friedman:96}, possibility structure \cite{dubois:90}, ranking structure \cite{goldszmidt:92} and $\varepsilon$-semantics \cite{adams:75,pearl:89}. The common idea of the first three approaches is that $\Delta$ entails $\alpha$ if $\llbracket\Delta\land\alpha\rrbracket$ is preferable to $\llbracket\Delta\land\lnot\alpha\rrbracket$ given a preference on models. The idea of the last approach is that $\Delta$ entails $\alpha$ if $p(\alpha|\Delta)$ is close to one in the dependency network quantifying the strength of the causal relationship between sentences. In contrast to the last approach, we focus on the causal relationship between sentences and models, i.e., states of the world. Any sentences are conditionally independent given a model. This fact makes it possible to update the probability distribution over models using observed sentences $\Delta$, and then to predict the truth of unobserved sentence $\alpha$ only using the distribution. It is different from the first three approaches assuming a preference on models prior to the analysis. It is also different from the last approach introducing a new tricky semantics, i.e., $\varepsilon$-semantics, to handle the interaction between models and sentences outside their probabilistic inference. The characteristic allows us to answer the open question \cite{brewka:97}:
\begin{quote}
Perhaps, the greatest technical challenge left for circumscription and model preference theories in general is how to encode preferences among abnormalities or defaults.
\end{quote}
The abnormalities or defaults can be seen as unobserved statements. We thus think that the preferences should be encoded by their posterior probabilities derived by taking into account all the uncertainties of models.
\par
A natural criticism against our work is that the Bayesian entailment is inadequate as a non-monotonic consequence relation due to the lack of Cautious monotony and Cut. Indeed, Gabbay \cite{gabbay:85} considers, on the basis of his intuition, that non-monotonic consequence relations satisfy at least Cautious monotony, Reflexivity and Cut. However, it is controversial because of unintuitive behavior of Cautious monotony and Cut in extreme cases. A consequence relation $\vsim$ with Cut, for instance, satisfies $\vsim x_{N+1}$ when it satisfies $\vsim x_{1}$ and $x_{i}\vsim x_{i+1}$, for all $i (1\leq i\leq N)$. This is very unintuitive when $N$ is large. Brewka \cite{brewka:97} in fact points out the infinite transitivity as a weakness of Cut. In this paper, we reconcile both the positions by providing the alternative inferential properties: Classically cautious monotony and Classical cut. The reconciliation does not come from our intuition, but from theoretical analysis of the Bayesian entailment. What we introduced to define the Bayesian entailment is only the probability distribution over valuation functions, representing uncertainty of states of the world. Given the distribution, the Bayesian entailment is simply derived from the laws of probability theory. Furthermore, the preferential entailment satisfying Cautious monotony and Cut is shown to correspond to the maximum a posteriori entailment that is an approximation of the Bayesian entailment. It tells us that Cautious monotony and Cut are ideal under the special condition that a state of the world exists deterministically. They are not ideal under the general perspective that states of the world are probabilistically distributed.
\par
The Bayesian entailment is flexible to extend. For example, a possible extension of Figure \ref{dependency2} is a hidden Markov model shown in Figure \ref{fig:DBN}. It has a valuation variable and a sentence(s) variable, for each time step $t$ where $1\leq t\leq N$. Entailment $\Delta_{1},...,\Delta_{N}\vapprox_{\omega}\alpha_{N}$ defined in accordance with Definition \ref{def:BE} concludes $\alpha_{N}$ by taking into account not only the current observation $\Delta_{N}$ but also the previous states of the world $V_{N-1}$ updated by all of the past observations $\Delta_{1},..., \Delta_{N-1}$. It is especially useful when observations are contradictory, ambiguous or easy to change.
\par
Finally, our hypothesis is that the Bayesian entailment can be a mathematical model of how human brains implement an entailment. Recent studies of neuroscience, e.g., \cite{lee:03,knill:04,george:05,ichisugi:07,chikkerur:10,colombo:12,funamizu:16}, empirically show that Bayesian inference or its approximation explains several functions of the cerebral cortex, the outer portion of the brain in charge of higher-order functions such as perception, memory, emotion and thought. It raises the Bayesian brain hypothesis \cite{friston:12} that the brain is a Bayesian machine. Since logic, as the law of thought, is a product of a human brain, it is natural to think there is a Bayesian interpretation of logic. Of course, we understand that the Bayesian brain hypothesis is controversial and that it is a subject to a scientific experiment. We, however, think that this paper provides sufficient evidences for the hypothesis in terms of logic.
%
%
\par
This paper gives a Bayesian account of entailment and characterizes its abstract inferential properties. The Bayesian entailment was shown to be a monotonic consequence relation in an extreme case. In general, it is a non-monotonic consequence relation satisfying Classical cautious monotony and Classical cut we introduced to reconcile existing conflicting views. The preferential entailment was shown to correspond to a maximum a posteriori entailment, which is an approximation of the Bayesian entailment. We finally discuss merits of our proposals in terms of encoding preferences on defaults, handling change and contradiction, and modeling human entailment.
%
%
\begin{figure}[t]
\begin{center}
 \includegraphics[scale=0.35]{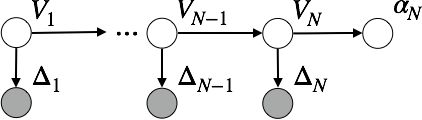}
  \caption{Hidden Markov model for an extended functionality of the Bayesian entailment.}
  \label{fig:DBN}
\end{center}
\end{figure}
%
\bibliographystyle{splncs}
\bibliography{btxkido}
\end{sloppypar}
\end{document}

\section{Conclusions and Future Work}

我々の方法は，知識を観測してモデルをベイズ更新することで，結果的に知識の不確実性が更新される．これは機械学習で用いられるベイズの発想を
しかも知識を観測することによってモデルの不確実性はベイズ更新される．
\par
最後に，本稿において世界の状態が確率的に分布していると考え，仮定する理由は何なのかを説明したい．これは人の認識とは離れた世界が確率的に分布しているということを必ずしも意味しない．ただし，認識上に上がるかは別として，世界の状態に関する認識は確率的に分布していると仮定する方がヒトを含む動物実験によって明らかになっている\cite{}. 論理は脳の生成物である．思考を司る大脳皮質はベイズ推論を行うことが明らかになりつつある．本稿で我々はベイズ推論に基づく伴意のあり方が他の方法と比較して論理学的に優位であることを示した．これらの事実から導かれる興味深い仮説は，本稿で示したベイズ推論に基づく伴意のあり方が脳が伴意を実際に実装しているあり方なのではないかというものである．これは科学的に検証されるべきものである．しかし本稿はその仮説を提示するだけの十分な根拠を含んでいる．
\par

最後に，なぜ今再びAIにおいて論理が重要なのか．それにはBayesian brain hypothesisが関係する．

What we have introduced in this paper is only the probability distribution over valuation functions. No semantics is introduced to define monotonic nor non-monotonic consequence relations. The Bayesian entailment and the classically cumulative consequence relation are naturally derived by probabilistic inference on the assumption. Not only discussing the current movement toward the Bayesian brain hypothesis, we showed theoretical advantages of the Bayesian interpretation of monotonic and non-monotonic consequence relations. What is supprizing is that we introduced no semantics to define

There are at least two major approaches to nonmonotonic formalisms: fixed point and model preference. Preferential models \cite{kraus:90}, $\epsilon$-semantics \cite{adams:75,pearl:89}, plausibility structures \cite{friedman:96,friedman:01}, possibility structures \cite{dubois:90} and ranking  structures \cite{goldszmidt:92} are all well-known semantics of the model-preference approach\footnote{Useful summaries of these contributions are found in \cite{makinson:05,sep-logic-nonmonotonic,sep-reasoning-defeasible}}. As with us, $\epsilon$-semantics is a probabilistic account of nonmonotonic reasoning. Our work can be technically and clearly distinguished from theirs in terms of a Bayesian perspective. In contrast to their non-Bayesian approaches, our approach uses Bayesian inference to perform nonmonotonic reasoning. Here, Bayes' theorem has an irreplaceable role in probabilistic inference with the naive Bayes model, shown in Figure \ref{dependency2}. To the best of our knowledge, the technical difference makes the following new impacts. First, our work makes it possible to normatively update a preference of models/worlds in the manner of Bayesian updating in accordance with the semantics of classical logic. This differs from $\epsilon$-semantics \cite{pearl:88} introducing a new semantics to handle dynamic determination of a preference of defaults.
Second, our work presents a probabilistic account of logical theory integrating reasoning for model construction, reasoning for model application, and their dynamic interaction. These two effects become possible only when using a naive Bayes model and solving an inverse problem of logical reasoning.
Third, our work shows that a preferential consequence relation corresponds to a maximum a posteriori consequence relation that is an approximation of a Bayesian consequence relation. This is contrast to the above-mentioned model-preference approaches because they aim to give a concrete formalism of the preferential consequence relation in agreement with it.

model-preference approach (e.g., closed-world assumption, circumscription and conditional logics) \cite{brewka:97,brewka:07}. Argumentation, e.g., acceptability semantics \cite{dung:95,baroni:07}, functions as an abstract theory characterizing those formalisms of the fixed-point approach. Meanwhile, a study of inferential properties, e.g., cumulativity and preferentiality \cite{kraus:90}, functions as an abstract theory characterizing those of the model-preference approach.

there are two major approaches to nonmonotonic formalisms: fixed-point approach (e.g., default logic and modal nonmonotonic logics), and model-preference approach (e.g., closed-world assumption, circumscription and conditional logics) \cite{brewka:97,brewka:07}. Argumentation, e.g., acceptability semantics \cite{dung:95,baroni:07}, functions as an abstract theory characterizing those formalisms of the fixed-point approach. Meanwhile, a study of inferential properties, e.g., cumulativity and preferentiality \cite{kraus:90}, functions as an abstract theory characterizing those of the model-preference approach.

In this subsection, we argue that a Bayesian consequence relation is a more legitimate and systematic formalism of nonmonotonic reasoning, compared to consequence relations defined in accordance with the model preference approach. 

A preferential consequence relation is a representative of the model preference approach.

\section{Conclusions and Future Work}
In this paper, we adopted a Bayesian technique to get insight into important issues of the study of nonmonotonic reasoning. It resulted in a Bayesian consequence relation where a truth value of the consequence was determined with the posterior distribution over valuations updated with an observation of truth values of the premises. We characterized the consequence relation as a classically cumulative relation that was weaker then a cumulative relation. We showed that a Bayesian account of logical reasoning allowed us to unify reasoning for model construction, reasoning for model application, and their dynamic interaction. We further showed that a preferential consequence relation corresponded to a maximum a posteriori consequence relation that was a special case of a Bayesian consequence relation.
\par
One of the important future work is to reduce the number of possible valuations used in inference. This is practically important because it exponentially increases depending on the size of observations or expressiveness of language. Fortunately, our Bayesian approach allows us to directly apply Bayesian model selection to this problem.
%

\section{Note}

データ・ドリブンの時代にどのように伴意（論理的推論）を用いるべきか．

ノイズの存在，モデルの不確実性

\begin{itemize}
\item 
問題はこのデータ・ドリブンの研究のあり方の時代に論理的推論をどのように調整するか

このデータ・ドリブンの時代にその応用はそのメインストリームから消え去ってしまった．

一方，SATなど論理に基づくAIの汎用性は目覚ましい．レクタングルフィッティング，スケジューリングなど大規模な問題を実時間で扱えるようになってきている．

脳はベイズ推論やその近似推論を行なっているという指摘もある\cite{}．もしこれが事実ならば，単調論理や非単調論理などの帰結関係は人の合理的な認識に関わることに他ならないのであるからベイズ的に説明されるべき対象である．本稿において我々は人の最も合理的な思考の一つである古典的論理的推論はベイジアン学習であることを証明する．$KB\vDash\alpha$を古典論理の伴意を表すとする．ベイジアン学習では，$KB$が与えられる時の各モデルの事後確率を計算し，その確率分布を元にして$\alpha$の事後確率を求める．実際に，我々は

\par
今，$KB$をsentencesの集合とする．古典論理では，$KB$は$\alpha$を伴意する if and only if 任意のモデル$w$に関して，もし$KB$は$w$の下で真ならば$\alpha$は$w$の下で真である．$KB$が与えられる時の$\alpha$が真である確率$p(\alpha|KB)$は次式で表される．
\begin{eqnarray*}
p(\alpha|KB)=\sum_{w}p(\alpha|w)p(w|KB)
\end{eqnarray*}

我々の重要な発見は，この定義に基づいて伴意を評価することは次に示すベイズの学習式上の確率推論を解くことと等価であるということである．


\begin{description}
\item 
我々はモデル上の確率分布を考える．それは世界の状態についての不確実性を表す．ベイズ的伴意において$KB$はこの確率分布を更新する．

モデルは世界の状態についての不確実性を表す．

任意のセンテンスの真偽値をすべてのモデルを考慮して評価する．

センテンスの真偽の不確実性はモデルの不確実性から生じるものと考える．

そこで我々はモデルと真偽値をそれぞれ確率変数とみなして，モデルから真偽値への依存関係を

真偽値はモデルに依存するという

そしてモデル上の確率分布を考える．確率の高さはそのモデルが表す世界の典型度/通常度などを表す．

あるモデルが決まればそのモデルの下でのセンテンスの真偽値が決まる．我々はセンテンスの真偽値とモデルをそれぞれ確率変数とみなす．そしてモデル上の確率分布を

すべてのモデルを考慮してsentenceの

各sentenceの真偽はそのすべてのモデルを

モデル上の確率分布が各々のsentenceの真偽を確率的に決定すると考えるのである．

モデル上の確率分布を仮定し，

モデルは確率的に分布していると仮定し，

モデルは確率的に分布しているという状況においてsentenceの真偽を確率的に求めるのである．

\item[着想] $KB$をsentencesの集合，$\alpha$をsentence，$w$をモデルとする．古典論理において$KB$は$\alpha$を伴意する if and only if 任意のモデル$w$に関して，もし$KB$は$w$の下で真ならば$\alpha$は$w$の下で真である．我々の重要な発見は，この定義に基づいて伴意を評価することは次に示すベイズの学習式上の確率推論を解くことと等価であるということである．
\begin{eqnarray*}
p(\alpha|KB)=\sum_{w}p(\alpha|w)p(w|KB)
\end{eqnarray*}
任意の
$p(\alpha|KB)$は$KB$が与えられる時の$\alpha$の事後確率である．それはモデル$w$の下での$\alpha$の尤度$p(\alpha|w)$と$KB$が観測されたときのモデル$w$の事後確率$p(w|KB)$の積である．ここで我々の独創的なアイディアは，the consequences are made by using all the models, weighted by their probabilities, rather than by using just a single ``best'' model. 次章以降において直感を見失わないためにここで簡単な例を一つ挙げておく．

\begin{example}

\end{example}

\item[貢献] この単純なアイディアから多くのことが導かれる．古典論理の伴意はベイズ学習\cite{russel}の特殊な場合である．さらに，その一般の場合は非単調論理の伴意に相当する．また，非単調論理の伴意の代表である選好伴意はベイズ的伴意の特殊な場合である．具体的には，選好伴意はベイズ学習の近似である事後確率最大化推定に基づく予測であること．この事実を踏まえ，我々はベイズ的伴意を特徴付ける抽象的な推論の性質であるclassical cautious monotonyとclassical cutを与える．さらに考察として動的ベイジアンネットワークの可能性を述べる．

\item[その他] 本稿において我々はベイズ的伴意を定式化する．$KB$をsentencesの集合，$\alpha$をsentence，$w$をモデルとする．我々は$p(\alpha|KB)$を用いてそのベイズ的伴意を定義する．それは$KB$が与えられる時の$\alpha$の事後確率を表す．我々の独創的な着想は，確率的に分布するすべてのモデルを考慮する点にある．ここでは$KB$はモデル上の確率分布を更新し，$\alpha$は更新されたモデル上の確率分布を用いて導かれる．それは次式で表される．
\begin{eqnarray*}
p(\alpha|KB)=\sum_{w}p(\alpha|w)p(w|KB)
\end{eqnarray*}

$KB$を背景知識，$\alpha$を論理式としよう．我々はモデル上の確率分布を考える．それは世界の状態についての不確実性を表す．ベイズ的伴意において$KB$はこの確率分布を更新する．

\item[評価]

この適切さを論じるには実験的または理論的アプローチがある．実験的アプローチはどの程度よくベイズ的伴意が人の推論を模倣するかを調べることに注力する．理論的アプローチはベイズ的伴意の持つ推論の性質を分析することに注力する．本稿は後者のアプローチに注力する．我々は古典論理の伴意はベイズ学習\cite{russel}の特殊な場合であることを示す．さらに，その一般の場合は非単調論理の伴意に相当することを示す．また，非単調論理の伴意の代表である選好伴意はベイズ的伴意の特殊な場合であることを示す．具体的には，選好伴意はベイズ学習の近似である事後確率最大化推定に基づく予測であることを示す．この事実を踏まえ，我々はベイズ的伴意を特徴付ける抽象的な推論の性質であるclassical cautious monotonyとclassical cutを与える．さらに考察として動的ベイジアンネットワークの可能性を述べる．
\item[着想とオリジナリティ] 他の方法と比較するときのベイズ的帰結関係の着想の核心は，モデル（世界）は確率的に分布していると考えることにある．所与の背景知識$KB$と論理式$\alpha$に対して，$KB\vDash\alpha$は次のように評価される．そこでは$KB$はモデル上の確率分布を更新する．そして更新されたモデル上の確率分布のすべての情報を用いて論理式$\alpha$の真偽値を求める．機械学習の言葉を借りれば$KB\vDash\alpha$とは$KB$を観測するときの$\alpha$の予測問題である．技術的な詳細を

\item[貢献] 本稿の貢献は，論理的推論とはベイズ的な予測

論理的帰結関係を機械学習の予測の問題に帰着させたことである．

本稿の最大の貢献は
機械学習や統計学で用いられる
ベイズ推論は

実験または理論により

は実験的評価

理論的分析

経験に基づく方法と理論に基づく

\item[非単調論理] 一方，人工知能の領域ではより人間らしい推論のあり方を求めて常識推論や非単調推論の定式化が進められてきた．確率論と論理学の融合が適切な方法だと考える見方が強い\cite{}．

In CL a formula \UTF{03D5} is entailed by Γ (in signs Γ\UTF{22A8}\UTF{03D5}) if and only if \UTF{03D5} is valid in all classical models of Γ.

Γ|\UTF{223C}\UTF{03D5} as “\UTF{03D5} holds in the most normal/natural/etc. models of Γ.”

モデルを順序づける確率的方法，尤もらしい推論方法，

これらの本質的な問題は

\item[]


モデル選好に基づく非単調帰結関係の定式化の著名なものにはpreferential models \cite{kraus:90}, $\epsilon$-semantics \cite{adams:75,pearl:89}, plausibility \cite{friedman:96}/possibility \cite{dubois:90}/ranking \cite{goldszmidt:92} structuresなどがある．そのうち$\epsilon$意味論はprobabilistic account for nonmonotonic reasoningを扱う代表である．その基本的な着想はデフォルト$u\rightarrow v$を条件付き確率$p(I_{v}=1|I_{u}=1)\geq 1-\epsilon$と扱うものである．where $\epsilon$はarbitrary small number close to $0$である．デフォルト（e.g., ``Penguins generally do not fly''）は別のデフォルト（e.g., ``Birds generally fly''）に優先することが新たなデフォルト（e.g., ``Penguins are generally birds''）の追加により動的に決まるという特長を$\epsilon$意味論は持つ(p. 497, \cite{pearl:88})．著者\cite{pearl:88}は$p(I_{v}=1|I_{u}=1)>1/2$の解釈の下ではそれがpreferentialではないことを明らかにし，著者ら\cite{benferhat:99}はそれがpreferentialであるための条件（i.e., big-stepped probability）を明らかにする．
\par
文献\cite{friedman:96,kraus:90,}は，付値上の分布（plausibility/ranking/possibility）を仮定し，モデルの（plausibility/ranking/possibility）に基づいて論理式の選好関係を定義する（文献\cite{stanford}はこれらを要約する．）かれらの採用する帰結関係の定義は我々のものと同様である．
\par
これらの研究の弱点は，モデルを構築するための推論とモデルを適用するための推論の統一理論を提供しないことである．我々はナイーブベイズ識別器上に定義されるベイズ予測によりこれを与えた．いずれを説明・遂行するのも確率的推論である．where 付値の事後分布の計算はモデル構築に相当し，解釈の予測分布はモデル適用に相当する．

本質的な違いは，知識の不確実性を扱うかモデルの不確実性を扱うかにある．

本質的な違いはベイズに基づく．例えば命題$p(a|b)$を考える．それらの研究では$p(a|b)$の値は所与であるかもしくはベイズの定理を用いて次式で求められる．
\begin{eqnarray*}
p(a|b)=\frac{p(b|a)p(a)}{p(b)}
\end{eqnarray*}
ここで右辺に現れる$p(b|a)$は既知であることが仮定される．

人工知能の領域では人の常識推論や非単調推論の定式化が進められてきた．ここでは確率論と論理学を組み合わせた方法がより人間らしい推論のあり方として考えられてきた．

とはいっても論理と確率を

%
%
\item 本稿において我々は伴意はベイズ学習\cite{russel}の特殊な場合であることを示す．ベイズ学習とは

であることを証明する．

我々はベイズ的帰結関係を与える．この着想の核心は，モデル（世界）は確率的に分布していると考えることにある．所与の背景知識

モデル論的な定義と比較するときの

古典的帰結関係および非単調的帰結関係を

人はどのように自身が論理的推論を行なっているかを知ることはできない．それは直感としてしかt意識されない．

論理学者は思考の法則を取り出した．しかし，それが脳内でどのように処理されているかは未解決問題である．


\item 本研究の適切さの評価に取りうる方策は二つある．一つ目の実験的方法は，本研究と他研究のどちらがよりよく人が行う推論を模倣するかを実験的に検討することである．二つ目の理論的方法は，本研究と他研究をすでに広く認められた基準に基づいて分析，比較し，手法の理論的優勢を証明することである．
\item 本稿で我々は後者のアプローチを採用する．

論理的な推論を行なっているのか．

\item 本稿で我々は，論理的帰結（伴意）を代替するベイズ的帰結関係を与え，ベイズ的帰結関係の優位性を理論的に示す．推論の抽象的な性質を分析することで古典的推論関係はベイズ的帰結関係の特殊な場合であることを証明する．また，非単調推論関係の代表である選好伴意はベイズ的帰結関係の特殊な場合である事後確率最大帰結関係であることを証明する．
\item 他の方法と比較するときのベイズ的帰結関係の着想の核心は，モデル（世界）は確率的に分布していると考えることにある．所与の背景知識$KB$と論理式$\alpha$に対して，$KB\vDash\alpha$は次のように評価される．そこでは$KB$はモデル上の確率分布を更新する．そして更新されたモデル上の確率分布のすべての情報を用いて論理式$\alpha$の真偽値を求める．機械学習の言葉を借りれば$KB\vDash\alpha$とは$KB$を観測するときの$\alpha$の予測問題である．技術的な詳細を

\begin{definition}
$KB\vDash\alpha$ if and only if 任意のモデル$m$に関して，もし$KB$が$m$の下で真ならば$\alpha$は$m$の下で真である．
\end{definition}

そこで本稿ではこのうち知性に密接に関わる思考に着目し，特にその中でも思考の法則の学問である論理学に注目する．

脳がどのように論理的推論を実現しているかを考える基礎となる．

本論文は論理学の立場からベイジアン脳仮説を支持する．

論理的推論（伴意）がベイズ的アプローチでどのように扱われているかを研究することは伴意が脳でどのように実現されているかを考えるきっかけになろう．

\item このベイジアン脳仮説を思考の法則の学問である論理学の観点から理論的に検証することにある．

人の思考を代表するものに論理学がある．伴意は論理式間の論理的帰結関係を定義する．

我々は本稿において

ここでの問いは果たして伴意に対するベイズ的な解釈が存在するのだろうか．

もしベイジアン脳仮説が正しいとするならば

\item もし脳がベイズ推論器ならば，人の思考に対するベイズ的解釈が存在するはずである．

人の思考はベイズ的に解釈

このベイジアン脳仮説が正しいとするならば，人の思考はベイズ的に解釈されるべきもの

人の合理的あ

 もしベイジアン脳仮説が正しいとすれば，

人工知能の領域では自動運転，スパムメール分類，医療や故障診断，画像認識（音声，言語認識），ロボット制御など知的と考えられてきた人間の能力に勝るとも劣らない能力を発揮している．

これらの事実はある質問を生じさせる．それは人は

人工知能領域におけるこれらの汎用性から

これらの事実は，脳は

一部の脳科学者や心理学者を

人の感情，行動，思考は

\item 今日，Bayesの定理は科学的かつ工学的に大きな役割を果たしている．Bayesの定理を用いた人工知能の聴覚，視覚，視覚などは人のそれを凌駕するに至っている．科学的にはベイジアン脳仮説がある．

\item 今日，Bayesの定理は理論的にも実用的にも大きな役割を果たしている．

科学と応用科学の分野において大きな役割を果たしている．

科学的かつ工学的に大きな役割を果たしている．

ベイズ推論の工学的価値はそれが原因から結果

今日，Bayesの定理は科学的かつ工学的に大きな役割を果たしている．
理論的かつ実践的に

自動運転における状況認識．

顕在化した結果から潜在的な原因を探る問題は一般に逆問題

現象から潜在的原因を探る

現象（結果）から原因（

x-rayから病気を特定する．

その法則に忠実な方法で結果から原因を推定できることであろう．

理論は
原因に対して法則を適用することで結果が得られる．

例えば波紋の形状から

より卑近な例で言えば，みかんの色や硬さ，香りから味を推定する（買うべきオレンジを特定する）ような問題である．

観測から時間や因果を遡って

工学的に大きな役割を果たしている．Bayesの定理を用いた

ベイズの定理の工学的な価値は様々な領域における逆問題への一般的なアプローチである．

逆問題やパラメータ推定の一般的な解法であることが理論的に分かっている．

理論的には，MAP推定やML推定などがベイズ推定の特殊な形であることがわかっている．

\item 唯物論的に言うならば，宇宙は人の生みの親であり，人は思考の法則としての論理の生みの親である．宇宙をベイズ的に解釈する試み，

宇宙に対するベイズ的解釈，脳や心に対するベイズ的解釈，

これらすべてに対してベイズ的な説明がつくと考えるのは

\item 例えば，SATソルバーの基本的な目的は論理式で表現された制約を満たすモデルを返すことである．これは

\item 
今日，Bayesの理論が与える影響は極めて大きい．脳科学，心理学，物理学，計算機科学など様々な領域でベイズが用いられている．また今日では，脳はベイズ推論またはその近似推論を行なっているという指摘がある\cite{}．これは未だに仮説の段階であるが，もしこれが事実だとすればあらゆる認知活動に対してベイズ的な説明が与えられるだろう．
\item %
思考は人の認知活動である．思考の法則である伴意（論理的推論）は人の合理的な認知活動である．従って，論理的推論もまたベイズ的な説明が与えられるべき対象である．

\item 言い換えると論理的推論とは予測に他ならず，論理的帰結とは予測結果のなのである．
\item 機械学習における学習とはベイズ理論からすると単なる確率的推論に過ぎない．そこにはMAPやMLのような最適化は生じない．これがベイズ的アプローチが過学習を生じさせない理由である．
\end{description}

\begin{itemize}
\item 
今日，Bayesの定理は科学的にそして工学的に大きな役割を果たしている．脳科学や心理学は脳や心を科学的に解き明かそうとする．そこでは脳が厳密また近似的なベイズ的推定や予測を行う推論器であると考える有力かつcontroversialな立場が存在する．脳が処理する情報には知覚，感覚，行動，思考などが含まれる．これらを彼らはベイズの定理こそがこれらを統一的に説明しうると考える．その立場はベイジアン脳仮説と呼ばれる．ベイジアン脳仮説は有力である一方で反論も存在する．脳という限られた計算資源においてベイズの計算量の複雑性の問題．ベイジアン脳仮説を支持する実験結果が存在する一方で，ベイズのアイディアと整合しない実験結果もまた得られている．

高次推論である論理的推論

\item 

ベイズの定理の有用性は様々な

逆問題の解法としての有用性とそれに基づく予測の有用性であろう．

分析法として有用

脳科学や心理学では脳はベイズ推論やその近似推論を行なっているという指摘がある\cite{}．古典論理や非単調論理などの帰結関係は人の合理的な思考に関わることに他ならない．この立場からは伴意はベイズ的に説明されるべき対象に他ならない．我々はこれに対して本稿で肯定的な答えを与えた．すなわち$KB\vDash\alpha$を古典論理の伴意を表すとする．ベイジアン学習では，$KB$が与えられる時の各モデルの事後確率を計算し，その確率分布を元にして$\alpha$の事後確率を求める．\footnote{脳が厳密または近似的なベイズ的推定や予測を行う推論器であると考える立場はしばしばベイジアン脳仮説と呼ばれる．}
\item 
ベイズの定理が果たす理論的かつ実用的価値は様々な領域で確認されている．それは物理学，生物学，心理学，脳科学，計算機科学などである．しかし，著者の知るところ論理学の諸概念との関連を論じる研究は皆無である．意味論は各モデルの下での論理式の真偽を定義する(1.論理的結合子の意味論)．モデル下での論理式の真偽は論理式間に成り立つ関係である伴意を定義する(2.伴意（論理的帰結）の意味論)．これらをベイズ的に定式化することによって新たな意味論的仕組みを導入することなく古典的帰結や非単調帰結が再定式化できることを我々は示した．
\item ベイズの解釈の下では論理的帰結とはベイズ的予測である．
\end{itemize}

\par
本稿

\item それには論理的推論（つまり伴意）を機械学習的に取り扱うことが欠かせない．
それは論理的推論（つまり伴意）を機械学習的に定式化することを含む．

\item 確率と非単調論理に関する研究は数多く行われてきた\cite{}．
\item これらの研究が未だ答えられていない未解決問題のうち，最も重要なオープンクエスチョンはBrewkaらによって与えられている．
\begin{quote}
Perhaps, the greatest technical challenge left for circumscription and model preference theories in general is how to encode preferences among abnormalities or defaults. (p.21, \cite{brewka:97})
\end{quote}
\item この問いが難しい理由はこのエンコードは論理の範疇の外にあると一般に考えられてきたためである．
\item 我々はベイズの定理がこれを論理の範疇に自然に組み込むことを明らかにする．
\item 我々は本稿においてモデル上の確率分布を考え，論理的充足関係をベイズ的に定式化する．これにより論理式の真偽の観測がその確率分布を更新すること，そしてその更新された確率分布のすべての情報を用いた論理的帰結関係の扱いが可能にあることを示す．
\item アプローチの例による説明．

\item なぜまた論理に基づくAIなのか．深層学習などニューラルネットワークなどは工学への
深層学習に基づく
\item Science vs. Engineering: 
\item Postdiction: 深層学習は脳内の処理を有効
\item Preference encode: 選好は事後確率の違いとして確率的推論の枠組みで自然に扱える．
\item FOL: 計算量の問題を除いて，一階伴意への拡張は可能である．
最後に論理に基づくAI
\end{itemize}

\begin{example}
In order to understand the issue and our argument, let us consider the following simple example. Consider two propositional atoms $a$ and $b$, and the four valuations $v_{1},v_{2},v_{3}$ and $v_{4}$ defined as follows:
\begin{itemize}
\item $v_{1}(a)=0$ and $v_{1}(b)=0$
\item $v_{2}(a)=0$ and $v_{2}(b)=1$
\item $v_{3}(a)=1$ and $v_{3}(b)=0$
\item $v_{4}(a)=1$ and $v_{4}(b)=1$
\end{itemize}
Suppose that there is a preference relation $v_{1}\succ v_{2}\succ v_{3}\succ v_{4}$ where $v_{i}\succ v_{j}$ denotes that the world identified by $v_{i}$ is preferable to $v_{j}$ in the sense that the former is more normal/typical/natural than the latter.
\par
Now, it is obvious that relation $a\lor b\vdash \lnot a\land b$ does not hold according to classical consequence relation $\vdash$, i.e., entailment. By contrast, relation $a\lor b\vsim \lnot a\land b$ holds according to preferential consequence relation $\vsim$, that is a representative of the model-preference approach. This is because the most preferable model $v_{2}$ of the models of $a\lor b$, i.e., $\{v_{2}, v_{3}, v_{4}\}$, is also a model of the models of $\lnot a\land b$, i.e., $\{v_{2}\}$. However, is it rational to ignore all worlds except the most normal one? Ideally speaking, every world should be taken into account in accordance with its normality. As a matter of fact, a Bayesian consequence relation, defined in this paper, justifies this standpoint. We define a Bayesian consequence relation with conditional probability statement $p(\lnot a\land b|a\lor b)$\footnote{The use of a conditional probability statement is not new. We will discuss this in the section for related work.}. Our novel idea is to evaluate the probability statement $p(\lnot a\land b|a\lor b)$ with a naive Bayes model where a truth value of the consequence, i.e., $\lnot a\land b$, is determined with the posterior distribution over valuations updated with an observation of a truth value of the premise, i.e., $a\lor b$. For example, suppose the following prior distribution over valuations: $(p(v_{1}),p(v_{2}),p(v_{3}),p(v_{4}))=(0.4, 0.3, 0.2, 0.1)$, where $p(v_{i})>p(v_{j})$ holds if and only if $v_{i}\succ v_{j}$ holds. Relation $a\lor b\vapprox\lnot a\land b$ does not hold according to Bayesian consequence relation $\vapprox$ because the following expressions hold in the naive Bayes model.
%
%
%
%
\begin{eqnarray*}
p(\lnot a\land b|a\lor b)&=&\frac{p(\lnot a\land b,a\lor b)}{p(a\lor b)}\\
&=&\frac{\sum_{v}p(\lnot a\land b|v)p(a\lor b|v)p(v)}{\sum_{v}p(a\lor b|v)p(v)}\\
&=&\frac{\sum_{v}\llbracket \lnot a\land b\rrbracket_{v}\llbracket a\lor b\rrbracket_{v}p(v)}{\sum_{v}\llbracket a\lor b\rrbracket_{v}p(v)}\\
&=&\frac{0.3}{0.3+0.2+0.1}(=0.5)\\
&\not >&p(\lnot(\lnot a\land b)|a\lor b))(=0.5)
\end{eqnarray*}
In line 2, we use conditional probability statements to express the common idea that an interpretation of any formula only depends on valuations, denoted by $v$. Line 3 states that the posterior probability of an interpretation is equivalent to a truth value. Here $\llbracket\alpha\rrbracket_{v}$ denotes a truth value of $\alpha$ under $v$. Line 5 states that, given $a\lor b$, the posterior probability of the interpretation of $\lnot a\land b$ is not larger than its negation.
\end{example}
\par
Now, why is it rational to think that a Bayesian consequence relation is more legitimate than a preferential consequence relation? This is because the latter is an approximation and a special case of the former. In fact, the following expressions show that a Bayesian consequence relation becomes equivalent to a preferential consequence relation under a certain extreme condition.
%
\begin{eqnarray*}
p(\lnot a\land b|a\lor b)&=&\sum_{v}p(\lnot a\land b|v)p(v|a\lor b)\\
&\simeq&\sum_{v}p(\lnot a\land b|v)\delta(v=v_{MAP})\\
&=&p(\lnot a\land b|v_{2})=\llbracket \lnot a\land b\rrbracket_{v_{2}}=1
\end{eqnarray*}
In line 2, we assumed that the posterior distribution over valuations could be replaced with a simpler one where the posterior probability showed $1$ at $v_{MAP}$ and $0$ otherwise. Here, $v_{MAP}$ was a maximum a posteriori (MAP) valuation at which the posterior probability was maximal in the original distribution. Kronecker delta $\delta$ returns $1$ if $v=v_{MAP}$ and $0$ otherwise. Thus, a preferential consequence relation is a special form of Bayesian consequence relation that holds under the extreme condition defined with MAP estimate.
\par
Then, why is it rational to think that a Bayesian consequence relation is more systematic than a preferential consequence relation? This is because both phases of model construction, e.g., $p(v|a\lor b)$, and model application, e.g., $p(\lnot a\land b|v)$, are successfully integrated, in a unified manner, with probabilistic inference in the naive Bayes model. For example, in model construction, Bayesian inference updates the valuation distribution from $(0.4,0.3,0.2,0.1)$ to $(0,1/2,1/3,1/6)$ given observation $a\lor b$. Bayesian inference thus makes it possible to represent that worlds exist probabilistically.

%


%
\par
In this paper, we show that a Bayesian consequence relation is a classical consequence relation when it is deterministic. We next show that in general a Bayesian consequence relation is not a typical nonmonotonic consequence relation because it violates cautious monotony and cut. We characterize a Bayesian consequence relation as a relation satisfying classically cautious monotony and classical cut that are weaker notions of cautious monotony and cut, respectively. Finally, we show that a typical nonmonotonic consequence relation is a maximum a posteriori consequence relation that is an approximation of a Bayesian consequence relation.
\par
This paper has the following contributions. First, we show that the model-preference approach, i.e., one of the two major approaches, typically deals with an approximate model, but no absolutely ideal model of nonmonotonic reasoning. By contrast, a Bayesian consequence relation is an ideal model because it is completely governed by probability theory. Such ideal model is valuable in terms of logic because logic as a study of rational reasoning is more interested in the normative question (i.e., how one ought to perform nonmonotonic reasoning) rather than the descriptive question (i.e., how one performs it).
Second, we show that well-known inferential properties cannot characterize a Bayesian consequence relation. Although it causes a conflict with the influential work, we nevertheless advocate a Bayesian consequence relation. The rationale of the inferential properties is based on experience-based intuition. The rational of a Bayesian consequence relation is more reliable because it is based on the theoretical relationship between Bayesian inference and maximum a posterior inference.
%
%
Third, we give another plausible solution to the following crucial argument.
\begin{quote}
Perhaps, the greatest technical challenge left for circumscription and model preference theories in general is how to encode preferences among abnormalities or defaults. (p.21, \cite{brewka:97})
\end{quote}
In fact, we consider the inverse problem of logical reasoning to update normality/typicality of worlds given an observation of truth values. The update mechanism of ours is completely governed by the semantics of classical logic. It needs neither heuristics nor optimization in principle. This differs from other approach, e.g., $\epsilon$-semantics \cite{adams:75,pearl:88,pearl:89} introducing a new semantics to handle dynamic determination of a preference of defaults.

貢献は次のとおりである．
\begin{itemize}
\item Brewkaが指摘したdefaultsの優先順のエンコードに対するデータ・ドリブンな解法を与えた．
\item 確率１のベイズ学習は命題論理の伴意関係と一致することを示した．この事実は古典論理から見てベイズ学習がcorrectな推論を行うことの証拠である．
\item MAP学習は選好帰結関係と一致することを示した．この事実は非単調論理から見てベイズ学習がcorrectな推論を行うことの証拠である．また，この事実はベイズ学習から見てモデル選好アプローチの代表である選好帰結関係はrestrictedな推論を行うことの証拠である．
\item classical cautious monotonyとclassical cutという推論の性質を与えることによってベイズ学習を特徴付けた．
\item
\item またこの方法が非単調推論の既存の代表的なアプローチの課題を明らかにするとともに，単調帰結関係（伴意），非単調帰結関係，およびその一般化に対して統一的な解釈を与えることを示す．
\end{itemize}

%
%
%
\par
This paper is organized as follows. In Section 2, we formalize a Bayesian consequence relation. In Section 3, we show that a classical consequence relation is a special case of a Bayesian consequence relation. In Section 4, we characterize the Bayesian consequence relation in terms of nonmonotonicity. Section 5 discusses related work and Section 6 concludes with future work.



\section{Introduction}

\textit{Birds normally fly.} Such default statement or inference with exception is a main research subject in the field of nonmonotonic reasoning. At least, there are two major approaches to nonmonotonic formalisms: fixed-point approach (e.g., default logic and modal nonmonotonic logics), and model-preference approach (e.g., closed-world assumption, circumscription and conditional logics) \cite{brewka:97,brewka:07}. Argumentation, e.g., acceptability semantics \cite{dung:95,baroni:07}, functions as an abstract theory characterizing those formalisms of the fixed-point approach. Meanwhile, a study of inferential properties, e.g., cumulativity and preferentiality \cite{kraus:90}, functions as an abstract theory characterizing those of the model-preference approach.
%
%
%
\par
In this paper, we investigate the latter model-preference approach in terms of Bayesian perspective. We adopt a Bayesian technique to get insight into important issues of the approach. It results in a Bayesian consequence relation that is a more legitimate and systematic representation of nonmonotonic reasoning.
%
\par

\footnote{聴衆は論理推論を忘れた一般のAI研究者．もっと大きな視点で機械学習と論理をどのように調整すべきかという観点から書かれるべき．}


\begin{eqnarray*}
v_{1}(a)=0, v_{1}(b)=0\\
v_{2}(a)=0, v_{2}(b)=1\\
v_{3}(a)=1, v_{3}(b)=0\\
v_{4}(a)=1, v_{4}(b)=1\\
v_{1}(a)=0, v_{1}(b)=0
\end{eqnarray*}
$p(a\lor\lnot b=j|V=v_{i})$ is given as follows.
\begin{eqnarray*}
p(a\lor\lnot b=0|V=v_{1})=0, p(a\lor\lnot b=1|V=v_{1})=1\\
p(a\lor\lnot b=0|V=v_{2})=1, p(a\lor\lnot b=1|V=v_{2})=0\\
p(a\lor\lnot b=0|V=v_{3})=0, p(a\lor\lnot b=1|V=v_{3})=1\\
p(a\lor\lnot b=0|V=v_{4})=0, p(a\lor\lnot b=1|V=v_{4})=1\\
\end{eqnarray*}

Let $i$ denote an interpretation function, $i:{\cal L}\rightarrow\{0,1\}$, where $0$ and $1$ stand for \textit{false} and \textit{true}, respectively.

As usual, the interpretation function depends on the valuation function.

We assume that interpretation functions a a random variable depending on

$I_{\alpha}$ denotes

depends on the valuation.

In accordance with formal logic, we think of interpretations decided depending on valuations. We use a random variable, denoted by $I_{\alpha}$, to represent interpretations of a formula $\alpha\in{\cal L}$ and call $I_{\alpha}$ an \textit{interpretation random variable} for $\alpha$. It takes a truth value, either $0$ or $1$. Let $\llbracket\alpha\rrbracket$ denote a set of models of $\alpha$ and $\llbracket\alpha\rrbracket_{v}$ denote a truth value under valuation $v$. This allows us to define a conditional distribution over $I_{\alpha}$ given $V$, as follows.

%
\begin{definition}[Interpretation distribution]\label{CI}
Let $I_{\alpha}$ be an interpretation random variable for $\alpha\in{\cal L}$ and $V$ be a valuation random variable. A conditional distribution over $I_{\alpha}$ given $V$ is defined as follows:
%
\begin{eqnarray*}
p(I_{\alpha}|V)&=&\delta(I_{\alpha}=\llbracket\alpha\rrbracket_{V}),
\end{eqnarray*}
where $\delta$ is the Kronecker delta that returns $1$ if $I_{\alpha}=\llbracket\alpha\rrbracket_{V}$ holds and $0$ otherwise.
\end{definition}
Definition \ref{CI} shows that a posterior probability of an interpretation is equal to the truth value assigned by the interpretation. The following expressions hold as a special case.
%
\begin{eqnarray*}
p(I_{\alpha}=1|V)&=&\llbracket\alpha\rrbracket_{V}\\
p(I_{\alpha}=0|V)&=&1-\llbracket\alpha\rrbracket_{V}
\end{eqnarray*}

\section{Probabilistic Model for Bayesian Entailment}
Let ${\cal L}$ denote the propositional language. We assume that any propositional sentence $\alpha\in{\cal L}$ is a random variable, which has a truth value either $0$ or $1$ meaning \textit{false} and \textit{true}, respectively. $p(\alpha=1)$ represents the probability that $\alpha$ is \textit{true} and $p(\alpha=0)$ that $\alpha$ is \textit{false}. ${\cal P}$ denotes the set of all propositional symbols in ${\cal L}$. $v$ denotes a valuation function $v:{\cal P}\rightarrow\{0,1\}$. To handle uncertainty of states of the world, we assume that valuation functions are probabilistically distributed. Let $V$ denote a random variable of a valuation function. $p(V=v_{i})$ denotes the probability of valuation function $v_{i}$. It means the probability of the state of the world specified by $v_{i}$. Given two valuation functions $v_{1}$ and $v_{2}$, $p(V=v_{1})>p(V=v_{2})$ represents that the state of the world specified by $v_{1}$ is more natural/typical/normal than that of $v_{2}$. When the cardinality of ${\cal P}$ is $n$, there are $2^{n}$ possible states of the world. Thus, there are $2^{n}$ possible valuation functions. It is the case that $0\leq p(v_{i})\leq 1$, for all $i$ such that $1\leq i\leq 2^{n}$, and $\sum_{i=1}^{2^{n}}p(V=v_{i})=1$.

%
%

Then, under what condition does a Bayesian consequence relation become equivalent to a preferential consequence relation? The condition is given by a \textit{maximum a posteriori} (MAP) inference that is known to approximate Bayesian inference in the way that the single valuation having the maximal posterior probability is used in inference, instead of taking all valuations into account. This approximation is formally described as follows.
\begin{eqnarray*}
p(V|\Delta)\simeq
\begin{cases}
1& \it{if}~V=\argmax_{v}p(V=v|\Delta)\\
0& otherwise
\end{cases}
\end{eqnarray*}
Then, the Bayesian consequence relation is approximated as follows, where $v_{MAP}$ is the valuation maximizing the posterior probability given $\Delta$ and $\delta$ is the Kronecker delta.
\begin{eqnarray*}
p(\alpha|\Delta)=\sum_{v}p(\alpha|v)p(v|\Delta)\simeq\sum_{v}p(\alpha|v)\delta(v=v_{MAP})=p(\alpha|v_{MAP})
\end{eqnarray*}
This approximated consequence relation should be referred to as a \textit{maximum a posteriori consequence relation}, denoted by $\vapprox_{MAP}$. In fact, it is naturally derived from a Bayesian consequence relation in accordance with the relationship between Bayesian inference and maximum posteriori inference. In what follows, we write $\Delta\vapprox_{MAP}\alpha$ if and only if $p(I_{\alpha}=1|v_{MAP})=1$ because $p(I_{\alpha}=1|v_{MAP})\in\{0,1\}$ holds.

Second, our work presents a probabilistic account of logical theory integrating reasoning for model construction, reasoning for model application, and their dynamic interaction. These two effects become possible only when using a naive Bayes model and solving an inverse problem of logical reasoning.

\begin{itemize}
\item そこでは確率関数は古典論理をpresupposeする\cite{adams:98}．古典論理は確率関数（センテンスから$[0,1]$への写像）が満たすべき条件を規定する．
 \item 付値関数は確率関数に一般化される．確率的意味論
 \item それらの研究は古典論理をassumeするものであってreplaceするものではない．
 \item intuitionistic propositional logic (van Fraassen 1981b, Morgan and Leblanc 1983), modal logics (Morgan 1982a, 1982b, 1983, Cross 1993), classical first-order logic (Leblanc 1979, 1984, van Fraassen 1981b), relevant logic (van Fraassen 1983) and nonmonotonic logic (Pearl 1991). 1979, Goosens (1979)
\item There are a number of studies to combine logic and probability theory, e.g., \cite{adams:98}. Their common interest is not the notion of truth preservation but rather probability preservation, where the uncertainty of the premises preserves the uncertainty of the conclusion. They give various probabilistic accounts of causal dependency between statements. This makes it possible to propagate primitive probabilities of some statements throughout the dependency network. A probabilistic valuation function is in charge of the probability propagation in stead of the valuation function of the classical logic. Therefore, they presuppose the entailment of the classical logic.
 \end{itemize}


\paragraph{ベイジアン脳仮説} トーマスベイズの死後，1763年に発表された単純な等式であるベイズの定理は今日，脳科学，認知科学，物理学，生物学，人工知能など様々な分野で高く評価されている．人工知能の領域では，高度な自動運転，スパムメール分類，医療や故障診断，画像認識（音声，言語認識），ロボット制御にベイズの定理の応用であるベイズ推論が欠かせないものになっている．知的処理におけるベイズ推論の一般性（相性の良さ）は何人かの脳科学者や認知科学者にベイジアン脳仮説\cite{bain:16}を抱かせる（conceive）．それは脳は厳密または近似的なベイズ推論器なのではないかという仮説である．もしこの仮説が正しいとするならば，脳が司る感情，行動，思考などの現象を説明するベイズ的なアルゴリズムやデータ構造が存在するはずである．現時点でこの仮説には賛否両論がある．しかし，複数の計算論的神経科学者が大脳皮質の数理モデルにベイズ的アプローチを用いており\cite{lee:03,knill:04,george:05,ichisugi:07,chikkerur:10,colombo:12}，また大脳皮質の神経回路が動的ベイズ推論を行うということが実験的に明らかにされている\cite{funamizu:16}．
\paragraph{論理学} 論理学は思考の法則に関する学問である．論理学の最も重要な仕事は伴意を定義することである．伴意とはsentences間の論理的帰結関係である．直感的に言えば，伴意は正しい推論とは何かを教える．一方，それは人がどのように正しい推論を行なうかについて教えない．通例，論理学はモデル論や証明論を用いて伴意を定義する．そこにベイズ的な思想が関わることはない．しかしもし伴意をベイズ的に解釈すべき論理学的かつ理論的根拠が見つかったらどうだろうか．その事実はベイジアン脳仮説を後押しするだろう．また，脳がどのように伴意を実装しているのかについての一つの仮説となるだろう．そしてモデル論や証明論に基づく伴意の定義を批判的に見ることをもたらすだろう．
\paragraph{着想} 我々の着想は単純である．我々はモデル上の確率分布を考える．各確率はモデルが表す世界の確実性を表現する．確実性はその世界の自然さ，典型さ，通常さに関する主観的信念である．我々はすべてのモデルを考慮して各センテンスの真偽値を求める．ここでモデルは原因であり真偽値は結果である．$\alpha$をsentence，$w$をモデルとしよう．我々は$\alpha$が真である確率$p(\alpha)$を次式で表す．
\begin{eqnarray*}
p(\alpha)=\sum_{w}p(\alpha,w)=\sum_{w}p(\alpha|w)p(w)
\end{eqnarray*}
右辺はモデル$w$の下でのsentence $\alpha$の尤度$p(\alpha|w)$と$w$の事前確率$p(w)$の積である．$p(\alpha)$はその尤度と事前確率の積のすべてのモデルに関する総和である．ここで重要なことは任意のセンテンスの確率はプリミティブなものではないということである．それはすべてのモデルに従属する．今，$KB$をsentencesの集合とするとき，$KB$が与えられる時の$\alpha$が真である確率$p(\alpha|KB)$は次式で表される．
\begin{eqnarray*}
p(\alpha|KB)=\sum_{w}p(\alpha|w)p(w|KB)
\end{eqnarray*}
%
これはベイズ学習\cite{russel}として知られる式である．最後の項は$KB$が観測されるときのモデル$w$の事後確率$p(w|KB)$である．直感的に言えば，その事後確率はモデル達の更新を表し，その左の項の尤度はモデル達の適用を表す．従って，ベイズ学習式は$KB$によって更新されたすべてのモデルを用いて$\alpha$の真偽値を予測することを表す．我々は$p(\alpha|KB)$に基づいてベイズ的伴意$KB\vapprox\alpha$を定義する．それは伴意に対する次の新しいアイディアをもたらす．それは，the consequences are made by using all the models weighted by their probabilities. 

\paragraph{結果} この単純なアイディアから多くのことが示される．古典論理の伴意はベイズ的伴意の特殊な場合である．さらに，非単調論理の伴意はベイズ的伴意の一般的な場合である．非単調論理の伴意の代表である選好伴意はベイズ的伴意の近似である．それはベイズ学習の近似である事後確率最大化（MAP）学習に相当することを示す．これらの事実を踏まえ，我々はベイズ的伴意を特徴付ける抽象的な推論の性質であるclassical cautious monotonyとclassical cutを与える．

%
%

%

\section{Discussion and Conclusions}\label{sec:dis}
%
A natural criticism against our work is that the Bayesian entailment is inadequate as a non-monotonic consequence relation due to the lack of Cautious monotony and Cut. Indeed, Gabbay \cite{gabbay:85} considers, on the basis of his intuition, that non-monotonic consequence relations satisfy at least Cautious monotony, Reflexivity and Cut. However, it is controversial because of unintuitive behavior of Cautious monotony and Cut in extreme cases. A consequence relation $\vsim$ with Cut, for instance, satisfies $\vsim x_{N+1}$ when it satisfies $\vsim x_{1}$ and $x_{i}\vsim x_{i+1}$, for all $i (1\leq i\leq N)$. This is very unintuitive when $N$ is large. Brewka \cite{brewka:97} in fact points out the infinite transitivity as a weakness of Cut. In this paper, we reconcile both the positions by providing the alternative inferential properties: Classically cautious monotony and Classical cut. The reconciliation does not come from our intuition, but from theoretical analysis of the Bayesian entailment. What we introduced to define the Bayesian entailment is only the probability distribution over valuation functions, representing uncertainty of states of the world. Given the distribution, the Bayesian entailment is simply derived from the laws of probability theory. Furthermore, the preferential entailment satisfying Cautious monotony and Cut is shown to correspond to the maximum a posteriori entailment that is an approximation of the Bayesian entailment. It tells us that Cautious monotony and Cut are ideal under the special condition that a state of the world exists deterministically. They are not ideal under the general perspective that states of the world are probabilistically distributed. 
\par
Formalisms of non-monotonic consequence relations are studied using both non-probabilistic approaches, e.g., \cite{reiter:80,mcdermott:82,moore:85,brewka:97,dung:95,baroni:07}, and probabilistic approaches, e.g., \cite{adams:75,pearl:88,pearl:89,pearl:90,lehmann:92,harper:76,fraassen:95,hawthorne:98,geffner:92,makinson:05,sep-logic-nonmonotonic,sep-reasoning-defeasible}. The common interest of those probabilistic approaches is not the notion of truth preservation but rather probability preservation, where the uncertainty of the premises preserves the uncertainty of the conclusion. They give various probabilistic accounts of causal dependency between statements. This makes it possible to propagate primitive probabilities of some statements throughout the dependency network. In this paper, we give a probabilistic account of causal dependency between states of the world (i.e., valuation functions) and statements. It differs from those work since any statements are conditionally independent given a state of the world. It allows us to use observed statements to update the probability distribution over states of the world. Then, it allows us to apply the distribution to conclude unobserved statements. It gives an answer to the important open question \cite{brewka:97}:
\begin{quote}
Perhaps, the greatest technical challenge left for circumscription and model preference theories in general is how to encode preferences among abnormalities or defaults.
\end{quote}
We regard the abnormalities or defaults as unobserved statements. We thus think that the preferences should be encoded by their posterior probabilities, which take into account all the uncertainties of states of the world.
\par
The Bayesian entailment is flexible to extend. For example, a possible extension of Figure \ref{dependency2} is a hidden Markov model shown in Figure \ref{fig:DBN}. It has a valuation variable and a sentence(s) variable, for each time step $t$ where $1\leq t\leq N$. Entailment $\Delta_{1},...,\Delta_{N}\vapprox_{\omega}\alpha_{N}$ defined in accordance with Definition \ref{def:BE} concludes $\alpha_{N}$ by taking into account not only the current observation $\Delta_{N}$ but also the previous states of the world $V_{N-1}$ updated by all of the past observations $\Delta_{1},..., \Delta_{N-1}$. It is especially useful when observations are contradictory, ambiguous or easy to change.


%
%
%
\par
Our hypothesis is that the Bayesian entailment can be a mathematical model of how human brains implement an entailment. Recent studies of neuroscience, e.g., \cite{lee:03,knill:04,george:05,ichisugi:07,chikkerur:10,colombo:12,funamizu:16}, empirically show that Bayesian inference or its approximation explains several functions of the cerebral cortex, the outer portion of the brain in charge of higher-order functions such as perception, memory, emotion and thought. It raises the Bayesian brain hypothesis \cite{friston:12} that the brain is a Bayesian machine. Since logic, as the law of thought, is a product of a human brain, it is natural to think there is a Bayesian interpretation of logic. Of course, we understand that the Bayesian brain hypothesis is controversial and that it is a subject to a scientific experiment. We, however, think that this paper provides sufficient evidences for the hypothesis in terms of logic.
%
%
\par
In this paper, we introduced the Bayesian entailment and then characterized its inferential properties. We showed that the maximum a posteriori entailment, an approximation of the Bayesian entailment, corresponds to the preferential entailment, which is a representative of model-preference approaches to non-monotonic consequence relations, e.g., $\varepsilon$-semantics \cite{adams:75,pearl:89}, plausibility structure \cite{friedman:96}, possibility structure \cite{dubois:90} and ranking structure \cite{goldszmidt:92}. We finally discussed our hypothesis that the Bayesian entailment or its approximation can be a mathematical model to explain how the brain implements entailment.